%%%%%%%% ICML 2018 EXAMPLE LATEX SUBMISSION FILE %%%%%%%%%%%%%%%%%
\documentclass[12pt]{article}

\usepackage{amsmath}
\usepackage{amsthm}
\usepackage{amssymb}
\usepackage{bbm}
\usepackage{verbatim}
\usepackage{graphicx}
\usepackage{subfigure}
\usepackage{afterpage}
\usepackage{etoolbox}
\usepackage{float}
\usepackage{rotating}
\usepackage[inline]{enumitem}

\usepackage{algorithmic}
\usepackage{algorithm}% http://ctan.org/pkg/algorithm

\makeatletter
\renewcommand{\p@subfigure}{\thefigure}
\makeatother

\newtheorem{theorem}{Theorem}[section]
\newtheorem{proposition}[theorem]{Proposition}

\newcommand{\abs}[1]{\left | #1 \right |}

% Notations:

\newcommand {\defeq}{\triangleq}

\newcommand {\myvec}[1] {{\mbox{\boldmath $#1$}}}

%%%%%%%%%%%%%%%%%%%%%%%%%%%%%%%%%%%%%%%%%%%%%%%%%%%%%%%

%\newcommand{\bcr}{\mbox{$\mathrm{BCR}$}}

%% Language and font encodings
\usepackage[english]{babel}
\usepackage[T1]{fontenc}

%% Sets page size and margins
\usepackage[a4paper,top=3cm,bottom=2cm,left=3cm,right=3cm,marginparwidth=1.75cm]{geometry}
\usepackage{longtable}
%% Useful packages
\usepackage{natbib}
\usepackage{amsmath}
\usepackage{graphicx}
\usepackage[colorinlistoftodos]{todonotes}
\usepackage[colorlinks=true, allcolors=blue]{hyperref}

\title{Geometry-Based Data Generation}
\author{Ofir Lindenbaum$^{1,2 \dagger}$ \and Jay S. Stanley III$^{1,3 \dagger}$ \and Guy Wolf$^{2\ddagger}$ \and Smita Krishnaswamy$^{1,4\ddagger \ast}$\\
\\
\normalsize{$^{1}$Department of Genetics;} 
\normalsize{$^{2}$Applied Mathematics Program;}\\
\normalsize{$^{3}$Computational Biology \& Bioinformatics Program;}\\
\normalsize{$^{4}$Department of Computer Science;}\\
\normalsize{Yale University, New Haven, CT, USA}\\
\\
\normalsize{$^\ast$Corresponding author. E-mail: smita.krishnaswamy@yale.edu }\\\normalsize{Address: 333 Cedar St, New Haven, CT 06510, USA}\\
\normalsize{$^\dagger$ These authors contributed equally.}
\normalsize{$^\ddagger$ These authors contributed equally.}
}

\begin{document}
\maketitle
\begin{abstract}
We propose a new type of generative model of high-dimensional data that learns a manifold geometry of the data, rather than density, and can generate points evenly along this manifold. This is in contrast to existing generative models that represent data density, and are strongly affected by noise and other artifacts of data collection. We demonstrate how this approach corrects sampling biases and artifacts, and improves several downstream data analysis tasks such as clustering and classification. Finally, we demonstrate that this approach is especially useful in biology where, despite the advent of single-cell technologies, rare subpopulations and gene-interaction relationships are affected by biased sampling. We show that SUGAR can generate hypothetical populations and reveal intrinsic patterns and mutual-information relationships between genes on a single cell dataset of hematopoiesis. 
\end{abstract}

\section{Introduction}
Manifold learning methods in general, and diffusion geometry ones in particular~\citep{lafon:DM}, are traditionally used to infer latent representations that capture intrinsic geometry in data, but they do not relate them to original data features. Here, we propose a novel data synthesis method, which we call SUGAR (Synthesis Using Geometrically Aligned Random-walks), for generating data in its original feature space while following its intrinsic geometry. This geometry is inferred by a diffusion kernel that captures a data-driven manifold and reveals underlying structure in the full range of the data space -- including undersampled regions that can be augmented by new synthesized data. Geometry-based data generation with SUGAR is motivated by numerous uses in data exploration. For instance, in biology, despite the advent of single-cell technologies such as single-cell RNA sequencing and mass cytometry, sampling biases and artifacts often make it difficult to evenly sample the dataspace.  Rare populations of relevance to disease and development are often left out~\citep{grun2015single}. By learning the data geometry rather than density, SUGAR is able to generate hypothetical cell types for exploration. 

Further, imbalanced data is problematic for many machine learning applications. Class density can strongly bias some classifiers~\citep{he2009learning, lopez2013insight, hensman2015impact}. In clustering, imbalanced sampling of ground truth clusters can lead to distortions in learned clusters~\citep{xuan2013exploring,wu2012uniform}. Sampling biases can also corrupt regression tasks; relationship measures such as mutual information are heavily weighted by density estimates and thus may mis-quantify the strength of dependencies with data whose density is concentrated in a particular region of the relationship~\citep{krishnaswamy2014conditional}. SUGAR can aid such machine learning algorithms by generating data that is balanced along its manifold. 

% In dimensionality reduction, data imbalance can distort principle component analysis such that the major components of a dataset only span the variance of a particular oversampled data type. For instance, if healthy individuals form 99 percent of a sample then the first principle components may only span variation in healthy individuals rather than the gradient between healthy and sick patients.  Beyond the challenges that data imbalance poses for machine learning algorithms, rare samples are often of great scientific interest~\citep{weiss2004mining}. Some examples for such cases include learning pronunciations of words~\citep{van1997small}, finding fraud credit card transactions~\citep{chan1998toward} and identifying earthquakes from seismic recordings~\citep{lindenbaum2017multi}. SUGAR can aid such machine learning algorithms by generating data that is balanced along its manifold. 

There are several advantages of our approach over contemporary generative models. Most other generative models attempt to learn and replicate the density of the data; this approach is intractable in high dimensions. Distribution-based generative models typically require vast simplifications such as parametric forms or restriction to marginals in order to become tractable. Examples for such methods include Gaussian Mixture Models (GMM)~\citep{rasmussen2000infinite}, variational Bayesian methods~\citep{bernardo2003variational}, and kernel density estimates~\citep{scott2008kernel}. In contrast to these methods, SUGAR does not rely on high dimensional probability distributions or parametric forms. SUGAR selectively generates points to equalize density; as such, the method can be used generally to compensate for sparsity and heavily biased sampling in data in a way that is agnostic to downstream application. In other words, whereas more specialized methods may use prior information (e.g., labels) to correct class imbalances for classifier training~\citep{smote}, SUGAR does not require such information and can apply in cases (such as for clustering or regression) even when such information does not exist.

Here, we construct SUGAR from diffusion methods and theoretically justify the density equalization properties of SUGAR. We then demonstrate SUGAR on imbalanced artificial data. Subsequently, we use SUGAR to improve classification accuracy on 61 imbalanced datasets. We then provide an illustrative synthetic example of clustering with SUGAR and show the clustering performance of the method on 115 imbalanced datasets obtained from the KEEL-dataset repository~\citep{alcala2009keel}.  Finally, we use SUGAR for exploratory analysis of a biological dataset, recovering imbalanced cell types and restoring canonical gene-gene relationships.

\section{Related Work}
Most existing methods for data generation assume a probabilistic data model. Parametric density estimation methods such as~\citep{rasmussen2000infinite,varanasi1989parametric} find a best fitting parametric model for the data using maximum likelihood, which is then used to generate new data. Nonparametric density estimators~\citep{seaman1996evaluation,scott1985averaged,gine2002rates} use a histogram or a kernel~\citep{scott2008kernel} to estimate the generating distribution. Recently, Variational Auto-Encoders~\citep{kingma2013auto,doersch2016tutorial} and Generative Adversarial Nets~\citep{goodfellow2014generative} have been demonstrated for generating new points from complex high dimensional distributions. 

A family of manifold based Parzen window estimators are presented in~\citep{parzen1,parzen2,parzen3}. These methods exploit the manifold's structure to improve a density estimation of high dimensional data. Markov-chain Monte Carlo on implicitly defined manifolds was presented in~\citep{mmcmc1,mmcmc2}. The authors use implicit constrains to generate new points that follow a manifold structure. Another scheme by \citet{oztireli2010spectral} defines a spectral measure to resample existing points such that manifold structure is preserved and the density of points is uniform. These methods differ from the proposed approach as they either require implicit constraints or they change the values of existing points in the resampling process.

\section{Background}
\label{sec:Background} 
\subsection{Diffusion Geometry}
\label{SecDiff}
\label{sec:construction}
\citet{lafon:DM} proposed the non-linear dimensionality reduction framework called Diffusion Maps (DM).  This popular method robustly captures an intrinsic manifold geometry using a row-stochastic Markov matrix associated with a graph of the data.  This graph is commonly constructed using a Gaussian kernel
\begin{equation}
\label{GKernel}
{\cal{K}}(\myvec{x}_i,\myvec{x}_j)\defeq K_{i,j}=exp\left( {-\frac{\|\myvec{x}_i-\myvec{x}_j\|^2}{2
        \sigma^2}  }\right),i,j=1,...,N
\end{equation}
where $x_1,\ldots, x_N$ are data points, and $\sigma$ is a bandwidth parameter that controls neighborhood sizes. Then, a diffusion operator is defined as the row-stochastic matrix ${P_{i,j}={\cal{P}}(\myvec{x}_i,\myvec{x}_j)=[{{\myvec{D}}^{-1}{\myvec{K}}}}]_{i,j}$, $i,j=1,...,N$, where $\myvec{D}$ is a diagonal matrix with values corresponding to the degree of the kernel $D_{i,i}=\hat{d}(i)=\sum_j {\cal{K}}(\myvec{x}_i,\myvec{x}_j)$. The degree $\hat{d}(i)$ of each point $\myvec{x}_i$ encodes the total connectivity the point has to its neighbors. The Markov matrix $\myvec{P}$ defines an imposed diffusion process, shown by \citet{lafon:DM} to efficiently capture the diffusion geometry of a manifold $\cal{M}$. 

The DM framework may be used for dimensionality reduction to embed data using the eigendecomposition of the diffusion operator. However, in this paper, we do not directly use the DM embedding, but rather a variant of the operator $\myvec{P}$ that captures diffusion geometry. In Sec. \ref{sec:sugar}, we explain how this operator allows us to ensure the data we generate follows diffusion geometry and the manifold structure it represents.

\subsection{Measure Based Gaussian Correlation}
\label{sec:MGC}
%%Guy please expand on the MGC here
\citet{wolf:MGC} suggest the Measure-based Gaussian Correlation (MGC) kernel as an alternative to the Gaussian kernel (Eq.~\ref{GKernel}) for constructing diffusion geometry based on a measure $\mu$. The measure could be provided in advance or approximated based on the data samples. The MGC kernel with a measure $\mu (\myvec{r})$, $r \in \myvec{X}$, defined over a set $\myvec{X}$ of reference points, is $\hat{\cal{K}}(\myvec{x}_i,\myvec{x}_j)=\sum_{\myvec{r} \in \myvec{X}} {\cal{K}}(\myvec{x}_i,\myvec{r}){\cal{K}}(\myvec{r},\myvec{x}_j) \mu (\myvec{r})$, $i,j=1,...,N$, where the kernel $\cal{K}$ is some decaying symmetric function. Here, we use a Gaussian kernel for $\cal{K}$ and a sparsity-based measure for $\mu$. % we expand on this parameter in section \ref{sec:TimeScale}.  

%\subsection{Measure-based Gaussian Correlation (MGC)}

%\input{Sections/MGC}

\subsection{Kernel Bandwidth Selection}
The choice of kernel bandwidth $\sigma$ in Eq.~\ref{GKernel} is crucial for the performance of Gaussian-kernel methods. For small values of $\sigma$ the resulting kernel $\cal{K}$ converges to the identity matrix; inversely, large values of $\sigma$ yield the all-ones matrix. Many methods have been proposed for tuning $\sigma$. A range of values is suggested in~\citep{singer2009detecting} based on an analysis of the sum of values in $\cal{K}$. \citet{lindenbaum2017kernel} presented a kernel scaling method that is well suited for classification and manifold learning tasks. We describe here two methods for setting the bandwidth: a global scale suggested in~\citep{Keller} and an adaptive local scale based on~\citep{GKernelZelnik}.

For degree estimation we use the max-min bandwidth~\citep{Keller} as it is simple and effective. The max-min bandwidth is defined by $\sigma^2_{\text{MaxMin}}={\cal{C}}\cdot \underset{j}{\max} [ \underset{i,i\neq j}{\min} (\|\myvec{x}_i-\myvec{x}_j\|^2)]$, where ${\cal{C}} \in [2,3]$. This approach attempts to force that each point is connected to at least one other point. This method is simple, but highly sensitive to outliers. \citet{GKernelZelnik} propose adaptive bandwidth selection. At each point $\myvec{x}_i$, the scale $\sigma_i$ is chosen as the $L^1$ distance of $\myvec{x}_i$ from its $r$-th nearest neighbor. This adaptive bandwidth guarantees that at least half of the points are connected to $r$ neighbors. Since an adaptive bandwidth obscures density biases, it is more suitable for applying the resulting diffusion process to the data than for degree estimation.

%\subsection{Markov Affinity-based Graph Imputation of Cells (MAGIC)}
%delete an put the next section
%\input{Sections/MAGIC}

\section{Data Generation}
\label{sec:sugar}
% In this section, we detail the proposed method named SUGAR: Synthesis Using Geometrically Aligned Random-walks. SUGAR uses the MGC diffusion operator (Eq.~\ref{eq:mgc0}) to generate synthetic data uniformly along the manifold $\cal{M}$. This is done by creating additional points in sparse regions and pulling them towards the manifold using an imposed Random-walk.

\subsection{Problem Formulation}
Let $\mathcal{M}$ be a $d$ dimensional manifold that lies in a higher dimensional space $\mathbbm{R}^D$, with $d < D$, and let $\myvec{X} \subseteq \mathcal{M}$ be a dataset of $N = \abs{\myvec{X}}$ data points, denoted $\myvec{x}_1,\ldots,\myvec{x}_N$, sampled from the manifold. In this paper, we propose an approach that uses the samples in $\myvec{X}$ in order to capture the manifold geometry and generate new data points from the manifold. In particular, we focus on the case where the points in $\myvec{X}$ are unevenly sampled from $\mathcal{M}$, and aim to generate a set of $M$ new data points $\myvec{Y}=\{ \myvec{y}_1,...,\myvec{y}_M \} \subseteq \mathbbm{R}^{D}$ such that
\begin{enumerate*}
	\item the new points $\myvec{Y}$ approximately lie on the manifold $\mathcal{M}$, and
	\item the distribution of points in the combined dataset $\myvec{Z} \defeq \myvec{X} \cup \myvec{Y}$ is uniform.
\end{enumerate*}
Our proposed approach is based on using an intrinsic diffusion process to robustly capture a manifold geometry from $\myvec{X}$ (see Sec.~\ref{sec:Background}). Then, we use this diffusion process to generate new data points along the manifold geometry while adjusting their intrinsic distribution, as explained in the following sections.

\subsection{SUGAR: Synthesis Using Geometrically Aligned Random-walks}
\label{sec:Method}

SUGAR initializes by forming a Gaussian kernel $\myvec{G_X}$ (see Eq.~\ref{GKernel}) over the input data $\myvec{X}$ in order to estimate the degree $\hat{d}(i)$ of each $\myvec{x}_i \in \myvec{X}$.  Because the space in which degree is estimated impacts the output of SUGAR, $\myvec{X}$ may consist of the full data dimensions or learned dimensions from manifold learning algorithms.  We then define the sparsity of each point $\hat{s}(i)$  via
\begin{equation}
\label{eq:sparsity1}
\hat{s}(i)\defeq [{\hat{d}(i)}]^{-1},i=1,...,N.
\end{equation}
Subsequently, we sample $\hat{\ell}(i)$ points $\myvec{h}_j \in \myvec{H}_i, j = 1,..., \hat{\ell}(i)$ around each $\myvec{x}_i \in \myvec{X}$ from a set of localized Gaussian distributions $G_i = {\cal{N}}(\myvec{x}_i,\myvec{\Sigma}_i) \in \mathcal{G}$. The choice of $\hat{\ell}(i)$ based on the density (or sparsity) around $\myvec{x}_i$ is discussed in Sec.~\ref{sec:dense}. This construction elaborates local manifold structure in meaningful directions by \begin{enumerate*} \item compensating for data sparsity according to $f(\hat{s}(i))$, and \item centering each $G_i$ on an existing point $\myvec{x}_i$ with local covariance $\myvec{\Sigma}_i$ based on the $k$ nearest neighbors of $\myvec{x}_i$. \end{enumerate*} The set of all $M = \sum_i\hat{\ell}(i)$ new points, $\myvec{Y}_0 = \{\myvec{y}_1,...,\myvec{y}_M\}$, is then given by the union of all local point sets $\myvec{Y}_0 =  \myvec{H}_1 \cup \myvec{H}_2 \cup ... \cup \myvec{H}_N$. 
Next, we construct a sparsity-based MGC kernel (see Sec.~\ref{sec:MGC}) $\hat{\cal{K}}(\myvec{y}_i,\myvec{y}_j)=\sum_{r} {\cal{K}}(\myvec{y}_i,\myvec{x}_{r}){\cal{K}}(\myvec{x}_{r},\myvec{y}_j) \hat{s} (r)$ using the affinities in the sampled set $\myvec{X}$ and the generated set $\myvec{Y}_0$.  We use this kernel to pull the new points $\myvec{Y}_0$ toward the sparse regions of the manifold $\cal{M}$ using the row-stochastic diffusion operator $\myvec{\hat{{P}}}$ (see Sec.~\ref{sec:construction}). 
We then apply the powered operator $\myvec{\hat{{P}}}^t$ to $\myvec{Y}_0$, which averages points in $\myvec{Y}_0$ according to their neighbors in $\myvec{X}$.

The powered operator $\myvec{\hat{{P}}}^t$ controls the \textit{diffusion distance} over which points are averaged; higher values of $t$ lead to wider local averages over the manifold. The operator may be modeled as a low pass filter in which higher powers decrease the cutoff frequency. Because $\myvec{Y}_0$ is inherently noisy in the ambient space of the data, $ \myvec{Y}_t = \myvec{\hat{{P}}}^t \cdot \myvec{Y}_0$ is a denoised version of $\myvec{Y}_0$ along $\cal{M}$. The number of steps required can be set manually or using the Von Neumann Entropy as suggested by~\citet{moon2017visualizing}. Because the filter $\myvec{\hat{{P}}}^t \cdot \myvec{Y}_0$ is not power preserving, $\myvec{Y}_t$ is rescaled to fit the range of original values of $\myvec{X}$. A full description of the approach is given in Alg.~\ref{alg:DataGeneration}.

\begin{algorithm}[!tb] 
	\caption{SUGAR: Synthesis Using Geometrically Aligned Random-walks} \textbf{Input:} Dataset
	$\myvec{X} = \{\myvec{x}_1, \myvec{x}_2, \ldots, \myvec{x}_N\}, \myvec{x}_i \in \mathbb{R}^{D}$.\\
    \textbf{Output:}~Generated set of points $\myvec{Y} = \{\myvec{y}_1, \myvec{y}_2, \ldots, $
    $\myvec{y}_M\}, \myvec{y}_i \in \mathbb{R}^{D}$.~%
	\begin{algorithmic}[1]    
      \STATE Compute the diffusion geometry operators $\myvec{K}$, $\myvec{P}$, and degrees $\hat{d}(i)$, $i=1,...,N$
      (see Sec.~\ref{sec:Background}) % (Eqs.~\ref{GKernel}, \ref{EquationPDM}, and~\ref{eq:deg}).
   %   \STATE Estimate the density of points $\hat{d}(i),i=1,...,N$.
        \STATE Define a sparsity measure $\hat{s}(i),i=1,...,N$ (Eq.~\ref{eq:sparsity1}).
		\STATE Estimate a local covariance $\myvec{\Sigma}_i$, $i=1,...,N$, using $k$ nearest neighbors around each $\myvec{x}_i$. 
        \STATE For each point $i=1,...,N$ draw $ \hat{\ell}(i) $ vectors (see Sec.~\ref{sec:dense}) from a Gaussian distribution ${\cal{N}}(\myvec{x}_i,\myvec{\Sigma}_i)$. Let $\hat{\myvec{Y}}_0$ be a matrix with these $M = \sum^N_{i=1}\hat{\ell}(i)$ generated vectors as its rows.
        \STATE Compute the sparsity based diffusion operator $\hat{\myvec{P}}$ (see Sec~\ref{sec:Method}).
        \STATE Apply the operator $\hat{\myvec{P}}$ at time instant $t$ to the new generated points in $\hat{\myvec{Y}}_0$ to get diffused points as rows of ${\myvec{Y}}_t =\hat{\myvec{P}}^t \cdot {\myvec{Y}}_0$.
    \STATE Rescale $\myvec{Y}_t$ to get the output $\myvec{Y}[\cdot,j] = \myvec{Y}_t[\cdot,j] \cdot \frac{\text{percentile}(\myvec{X}[\cdot,j], .99)}{\max{\myvec{Y}_t[\cdot,j]}}$, $j=1,\ldots,D$, in order to fit the original range of feature values in the data.
	\end{algorithmic}
	\label{alg:DataGeneration}
\end{algorithm}

%explain what we are doing
%\subsection{Density and Sparsity Measures}

%\input{Sections/Density}
\subsection{Manifold Density Equalization}
\label{sec:dens}
\label{sec:dense}

The generation level $\hat{\ell}(i)$ in Alg.~\ref{alg:DataGeneration} (step 4), i.e., the amount of points generated around each $\myvec{x}_i$, determines the distribution of points in $\myvec{Y}_1$. Given a biased dataset $\myvec{X}$, we wish to generate points in sparse regions such that the resulting density over $\cal{M}$ becomes uniform. To do this we have proposed to draw $\hat{\ell}(i)$ points around each point $\myvec{x}_i,i=1,...,N$ from ${\cal{N}}(\myvec{x}_i,\myvec{\Sigma}_i)$ (as described in Alg.~\ref{alg:DataGeneration}). The following proposition provides bounds on the ``correct'' number of points $\hat{\ell}(i),i=1,...,N$ required to balance the density over the manifold by equalizing the degrees $\hat{d}(i)$.
\begin{proposition} 
\label{prop}
The generation level $\hat{\ell}(i)$ required to equalize the degree $\hat{d}(i)$, is bounded by
$$
\det\left(\myvec{I} + \frac{ \myvec{\Sigma}_i}{2\sigma^2}\right)^{\frac{1}{2}} \frac{\max(\hat{d}(\cdot))-\hat{d}(i)}{\hat{d}(i)+1} - 1 \leq \; \hat{\ell}(i) \; \leq \det\left(\myvec{I} + \frac{ \myvec{\Sigma}_i}{2\sigma^2}\right)^{\frac{1}{2}} [{\max(\hat{d}(\cdot))-\hat{d}(i)}]\,,
$$
where $\hat{d}(i)$ is the degree value at point $\myvec{x}_i$, $\sigma^2$ is the bandwidth of the kernel $\myvec{K}$ (Eq.~\ref{GKernel}) and $\myvec{\Sigma}_i$ is the covariance of the Gaussian designed for generating new points (as described in Algorithm \ref{alg:DataGeneration}).   

\end{proposition}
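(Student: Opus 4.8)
The natural strategy is to write a \emph{degree-balance equation} for each point and read $\hat{\ell}(i)$ off from it. In the augmented data set $\myvec{Z}=\myvec{X}\cup\myvec{Y}$, the kernel degree of $\myvec{x}_i$ becomes
\[
\hat{d}_{\myvec{Z}}(i)\;=\;\hat{d}(i)\;+\;\sum_{k=1}^{\hat{\ell}(i)}\mathcal{K}\big(\myvec{x}_i,\myvec{h}^{(i)}_k\big)\;+\;\sum_{j\neq i}\sum_{k=1}^{\hat{\ell}(j)}\mathcal{K}\big(\myvec{x}_i,\myvec{h}^{(j)}_k\big),
\]
where $\myvec{h}^{(i)}_1,\dots,\myvec{h}^{(i)}_{\hat{\ell}(i)}$ are the points drawn from $\mathcal{N}(\myvec{x}_i,\myvec{\Sigma}_i)$. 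Enforcing uniformity means $\hat{d}_{\myvec{Z}}(i)=\max(\hat{d}(\cdot))$ for every $i$, so the task reduces to quantifying the affinity mass the freshly generated points deposit on $\myvec{x}_i$, and in particular to sandwiching the dominant, self-generated contribution $\sum_k\mathcal{K}(\myvec{x}_i,\myvec{h}^{(i)}_k)$ between usable quantities.

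The analytic engine is a single Gaussian integral. For $\myvec{h}\sim\mathcal{N}(\myvec{x}_i,\myvec{\Sigma}_i)$, completing the square in the exponent of the product of the (unnormalized Gaussian) kernel with the density of $\mathcal{N}(\myvec{0},\myvec{\Sigma}_i)$ shows that
\[
\mathbb{E}\big[\mathcal{K}(\myvec{x}_i,\myvec{h})\big]\;=\;\det\!\Big(\myvec{I}+\tfrac{\myvec{\Sigma}_i}{2\sigma^2}\Big)^{-\frac12},
\]
the determinant being exactly the ratio of the two Gaussian normalizing constants; the factor $\tfrac12$ reflects the effective bandwidth of the kernel against which the generation noise is measured (a product of two Gaussians of bandwidth $\sigma$, as in the MGC construction of Sec.~\ref{sec:MGC}, halves the relevant inverse bandwidth). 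Substituting and solving the idealized identity $\hat{d}(i)+\hat{\ell}(i)\det(\myvec{I}+\myvec{\Sigma}_i/2\sigma^2)^{-1/2}=\max(\hat{d}(\cdot))$ yields the ``nominal'' count $\hat{\ell}(i)=\det(\myvec{I}+\myvec{\Sigma}_i/2\sigma^2)^{1/2}\,[\max(\hat{d}(\cdot))-\hat{d}(i)]$, which is precisely the claimed upper bound.

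To obtain the two-sided statement I would argue as follows. For the \textbf{upper} bound: the cross-terms $\sum_{j\neq i}\sum_k\mathcal{K}(\myvec{x}_i,\myvec{h}^{(j)}_k)$ are nonnegative, so generating the nominal number of points already drives $\hat{d}_{\myvec{Z}}(i)$ to at least $\max(\hat{d}(\cdot))$; hence that many points always suffice. For the \textbf{lower} bound one needs the reverse inequality --- an over-estimate of the degree gain attributable to $\hat{\ell}(i)$ --- which must account for (a) the sparsity reweighting $\hat{s}(r)=1/\hat{d}(r)$ in the MGC operator, which attenuates a point inserted near an already well-connected $\myvec{x}_i$ by a factor of order $1/(\hat{d}(i)+1)$, and (b) the point's own unit self-affinity, which supplies the additive $-1$ slack. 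Combining the bracketing estimates gives the displayed inequalities. I expect the lower bound to be the main obstacle: the Gaussian integral is routine, but deciding exactly which terms of $\hat{d}_{\myvec{Z}}(i)$ to retain, passing from the \emph{expected} single-draw affinity to a deterministic inequality, and correctly tracking the degree/sparsity renormalization after injection are where the real care lies, and where the precise form of the $1/(\hat{d}(i)+1)$ and $-1$ corrections gets pinned down.
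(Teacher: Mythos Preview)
Your framework---the degree-balance decomposition and the Gaussian-integral identity $\mathrm{E}\big[\mathcal{K}(\myvec{x}_i,\myvec{h})\big]=\det\!\big(\myvec{I}+\tfrac{\myvec{\Sigma}_i}{2\sigma^2}\big)^{-1/2}$---is exactly the paper's, and your upper-bound argument (drop the nonnegative cross-terms, solve for $\hat{\ell}(i)$) is essentially what the paper does. One small correction: the $2\sigma^2$ has nothing to do with the MGC product kernel; it is simply the $2\sigma^2$ already sitting in the denominator of the Gaussian kernel in Eq.~\ref{GKernel}. The MGC operator plays no role in this proposition---it enters the algorithm only afterwards, to pull the generated points toward the manifold.

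That matters because your lower-bound mechanism is misattributed. The factor $1/(\hat{d}(i)+1)$ does \emph{not} come from the sparsity weight $\hat{s}$ in the MGC kernel, and the $-1$ is not a self-affinity correction. In the paper the lower bound is obtained by \emph{over}-estimating the cross-term $\sum_{j\neq i}\sum_k\mathcal{K}(\myvec{x}_i,\myvec{h}^{(j)}_k)$ directly: the number of neighbors $\myvec{x}_j$ whose generated points can land near $\myvec{x}_i$ is taken to be $\hat{d}(i)$ (the degree used as a proxy for neighbor count), and a smoothness assumption on $\hat{\ell}(\cdot)$ gives that each such neighbor generates at most $\hat{\ell}(i)+1$ points. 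Replacing those neighbor-generated points by draws from $\mathcal{N}(\myvec{x}_i,\myvec{\Sigma}_i)$, which only increases the expected kernel value, yields
\[
\mathrm{E}\big[\bar{d}(i)\big]\;\le\;\hat{d}(i)\;+\;\big(\hat{d}(i)+1\big)\big(\hat{\ell}(i)+1\big)\,\det\!\Big(\myvec{I}+\tfrac{\myvec{\Sigma}_i}{2\sigma^2}\Big)^{-1/2},
\]
and setting the left side equal to $\max(\hat{d}(\cdot))$ and solving gives the stated lower bound. So the $\hat{d}(i)+1$ is ``$\hat{d}(i)$ neighbors plus self,'' and the $-1$ is the slack coming from the $+1$ in the smoothness bound $\hat{\ell}(j)\le\hat{\ell}(i)+1$. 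Your instinct that the lower bound is where the real work lies is correct, but the argument should stay entirely within the plain Gaussian-kernel degree and use the cross-term estimate above rather than any MGC or sparsity renormalization.
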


\begin{proof}

	Given $\myvec{x}_i \in \mathbb{R}^D,i=1,...,N$, the degree at point $\myvec{x}_i$ (section 4.2). SUGAR (Algorithm 1) generates $\hat{\ell}(i)$ points around $\myvec{x}_i$ from a Gaussian distribution $N(\myvec{x}_i,\myvec{\Sigma})$. After generating $\hat{\ell}(i)$ points, the degree at point $\myvec{x}_i$ is defined as
	\begin{equation} \label{eq:Nkde_Full}
	\bar{d}(i) =\sum^N_{j=1} e^ {-\frac{||\myvec{x}_i-\myvec{x}_j||^2}{2\sigma^2}  }  + \sum^{\hat{\ell}(i)}_{\ell=1} e^{-\frac{||\myvec{x}_i-\myvec{y}^i_{\ell}||^2}{2\sigma^2}  } + \sum_{\tilde{\ell} } e^{-\frac{||\myvec{x}_i-\myvec{y}_{\tilde{\ell}}||^2}{2\sigma^2}  } .\end{equation}
	In order to equalize the distribution, we require that the expectation of the degree be independent of $\myvec{x}_i$, thus, we set $\mathrm{E} \Big[ \bar{d}(i) \Big] =\text{C},i=1,...,N$, where $\text{C}$ is a constant that will be addressed later in the proof. Further, we notice that the first term on the right hand side of Eq.~$\ref{eq:Nkde_Full}$ can be substituted by $\hat{d}(i)$, while the second term is a sum of $\hat{\ell}(i)$ random variables. The third term accounts for the influence of points generated around $\myvec{x}_j,j \neq i$ on the degree at point $\myvec{x}_i$. It is hard to find a closed form solution for the number of points required at each $i$, however, based on two simple assumptions we can derive bounds on the required value. First, to find an upper bound, we assume that the points generated around $\myvec{x}_i$ have negligible effect on the $\hat{d}(j)$ at point $j \neq i$. This leads to an upper bound, because in practice the degree is affected by other points as well. We note that this assumption should hold with proper choices of $\sigma$ and $\myvec{\Sigma}_i,i=1,...,N$. By substituting the constant $C$, the term $\hat{d}(i)$, and using the independence assumption, the expectation of Eq.~\ref{eq:Nkde_Const} can be written as
	\begin{equation} \label{eq:Nkde_Const}
	\text{C}-\hat{d}(i) =\mathrm{E} \Big[  \sum^{\hat{\ell}(i)}_{\ell=1} e^{-\frac{||\myvec{x}_i-\myvec{y}^i_{\ell}||^2}{2\sigma^2}  } \Big].\end{equation}
	Then, since the variables $\myvec{y}^i_{\ell}$ are i.i.d., then the right hand side of Eq.~\ref{eq:Nkde_Const} becomes
	\[ \hat{\ell}(i)  \mathrm{E} \Big[ e^{-\frac{||\myvec{x}_i-\myvec{y}^i||^2}{2\sigma^2}  }\Big]= \] 
	\[\frac{\hat{\ell}(i)}{\det(2\pi \myvec{\Sigma}_i)^{1/2}} \int_{\mathbb{R}^D} { e^{-\frac{||\myvec{x}_i-\myvec{y}^i||^2}{2\sigma^2}  }} e^{- {(\myvec{x}_i-\myvec{y}^i)^T\myvec{\Sigma}^{-1}_i (\myvec{x}_i-\myvec{y}^i ) }} d \myvec{y}^i, \]
	and then using a change of variables $\tilde{\myvec{y}}=\myvec{y}^i-\myvec{x}_i$, the integral simplifies to
	\[ \frac{\hat{\ell}(i)}{\det(2\pi \myvec{\Sigma}_i)^{1/2}} \int_{\mathbb{R}^D} e^{- {(\myvec{x}_i-\myvec{y}^i)^T(\myvec{\Sigma}^{-1}_i + \frac{\myvec{I}}{2\sigma^2})(\myvec{x}_i-\myvec{y}^i ) }} d \myvec{y}^i= \]
	\[ \frac{\hat{\ell}(i)} { \det(\myvec{I} + \frac{\det(\myvec{\Sigma}_i )}{2\sigma^2})^{0.5}  }, \]
	where the final step is based on the integral of a Gaussian and a change of variables. Finally, this leads to an upper bound on the number of points $\hat{\ell}(i)$ is $\hat{\ell}(i) \leq \det(\myvec{I} + \frac{\det(\myvec{\Sigma}_i )}{2\sigma^2})^{0.5} [C- \hat{d}(i)]  $. By choosing $C=\max(\hat{d}(\cdot))$ we guarantee that $\hat{\ell}(i)\geq 0$ (which means that we are not removing points). 
	
	Now, by considering the third term on the right side of Eq. \ref{eq:Nkde_Full} we derive a lower bound on $\hat{\ell}(i)$. The term $\sum_{\tilde{\ell} } e^{-\frac{||\myvec{x}_i-\myvec{y}_{\tilde{\ell}}||^2}{2\sigma^2}}$ accounts for points generated around neighbor points around $\myvec{x}_i$. The number of points in the region of $||\myvec{x}_i-\myvec{x}_j||^2 \sim \sigma^2$ is proportional to the degree $\hat{d}(i)$. For the lower bound, we assume that the number of points generated by each neighbor of $\myvec{x}_i$ is no more than $\hat{\ell}(i)+1$. This assumption becomes realistic by imposing smoothness on the function $\hat{\ell}(i)$. Plugging this assumption and the degree value as a measure of connectivity into Eq. \ref{eq:Nkde_Full} we get that
	\[ \bar{d}(i) \leq \hat{d}(i)  + \sum^{\hat{\ell}(i)}_{\ell=1} e^{-\frac{||\myvec{x}_i-\myvec{y}^i_{\ell}||^2}{2\sigma^2}  } + \hat{d}(i)\sum^{\hat{\ell}(i)+1}_{\tilde{\ell}=1 } e^{-\frac{||\myvec{x}_i-\myvec{y}_{\tilde{\ell}}||^2}{2\sigma^2}  },  \] we now take the expectation of both sides
	\[C \leq \hat{d}(i)  + \mathrm{E} \Big[  \sum^{\hat{\ell}(i)}_{\ell=1} e^{-\frac{||\myvec{x}_i-\myvec{y}^i_{\ell}||^2}{2\sigma^2}  } \Big] + \hat{d}(i) \mathrm{E} \Big[ \sum^{\hat{\ell}(i)+1}_{\tilde{\ell}=1 } e^{-\frac{||\myvec{x}_i-\myvec{y}_{\tilde{\ell}}||^2}{2\sigma^2}  } \Big] \leq \hat{d}(i)  + [\hat{d}(i)+1] \mathrm{E} \Big[ \sum^{\hat{\ell}(i)+1}_{{\ell}=1 } e^{-\frac{||\myvec{x}_i-\myvec{y}^i_{{\ell}}||^2}{2\sigma^2}  } \Big],  \] by replacing $\myvec{y}_{\tilde{\ell}}$ by $\myvec{y}^i_{\tilde{\ell}}$ we increase the expectation value. Finally, exploiting the i.i.d assumption and the Gaussian distribution we conclude that by replacing the mean  with similar steps as used for the upper bound we conclude that 
	\[ \det(\myvec{I} + \frac{ \myvec{\Sigma}_i}{2\sigma^2})^{0.5}   [{\max(\hat{d}(\cdot))-\hat{d}(i)}]/[\hat{d}(i)+1] -1 \leq \hat{\ell}(i). \]
	\qedhere
	
	%The final estimator for the number of points is the quantized version of $\ell(i)$, explicitly $\hat{\ell}(i)= \lfloor \ell(i) \rfloor$.

\end{proof}
In practice we suggest to use the mean of the upper and lower bound to set the number of generated points $\hat{\ell}(i)$. In Sec.~\ref{sec:densexp} we demonstrate how the proposed scheme enables density equalization using few iterations of SUGAR. %The proof of Prop.~\ref{prop} is presented in the supplemental material. 

\section{Experimental Results}

\subsection{Density Equalization}
\label{sec:densexp}
\color{black} Given the circular manifold recovered in Sec. \ref{sec:mnist}, we next sought to evaluate the density equalization properties proposed in Sec. \ref{sec:dense}.
% This proposition is based on the assumption that the data is not sampled from the boundary of $\cal{M}$, therefore, to simplify the evaluation of the method we use a manifold without a boundary. 
We begin by sampling one hundred points from a circle such that the highest density is at the origin ($\theta=0$) and the density decreases away from it (Fig.~\ref{fig:circ1}, colored by degree $\hat{d}(i)$). SUGAR was then used to generate new points based on $\hat{\ell}(i)$ around each original point (Fig.~\ref{fig:circZt0}, before diffusion,~\ref{fig:circZt1}, after diffusion). We repeat this process for different initial densities and evaluate the resulting distribution of point against the amount of iteration of SUGAR. We perform a Kolmogorov-Smirnov (K-S) test to determine if the points came from a uniform distribution. The resulting P-values are presented in Fig.~\ref{fig:p_val}.
%   a random sample X could have come from a standard normal
\begin{figure}[!htb]
\centering
\subfigure[]{\label{fig:circ1} \includegraphics[width=0.22\textwidth] {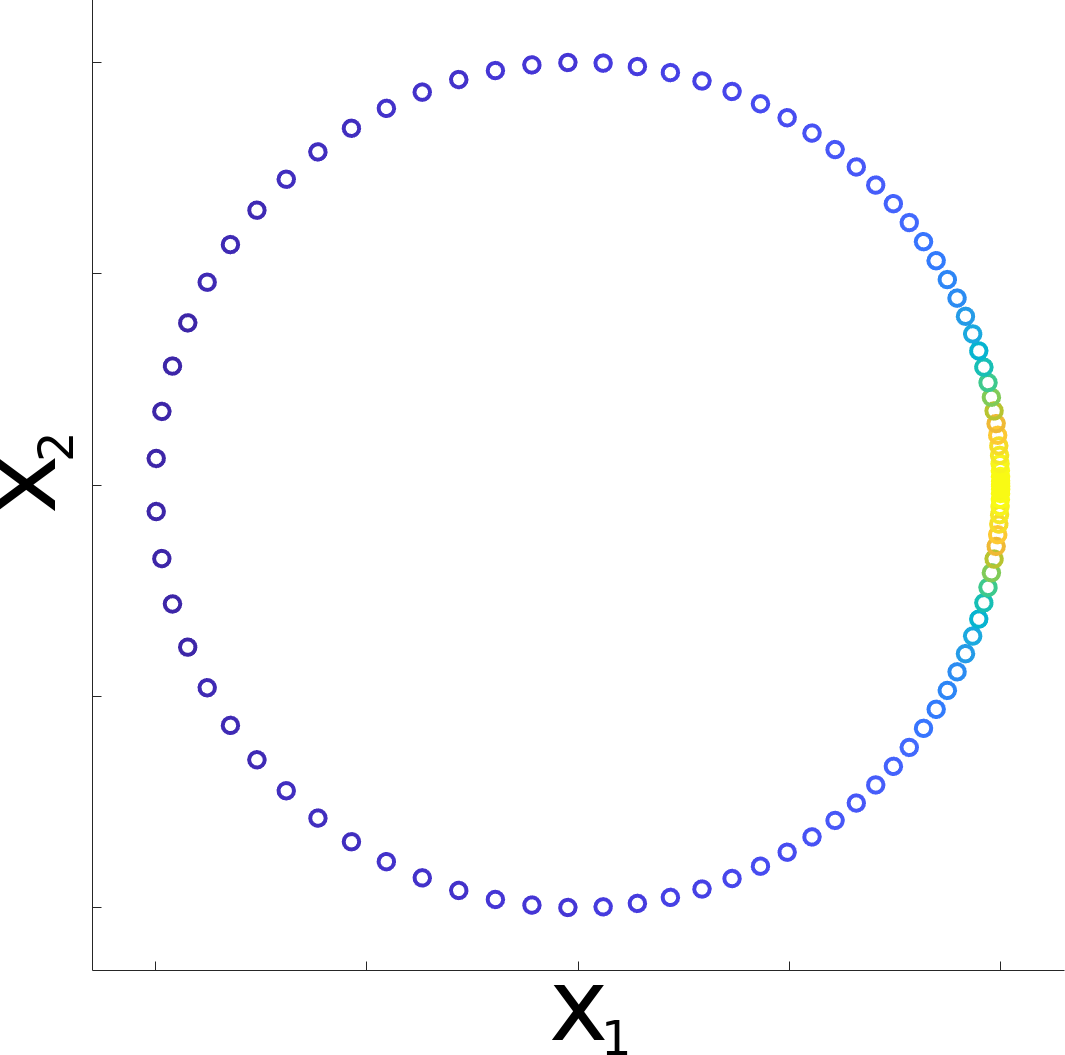}}
\subfigure[] {\label{fig:circZt0} \includegraphics[width=0.22\textwidth] {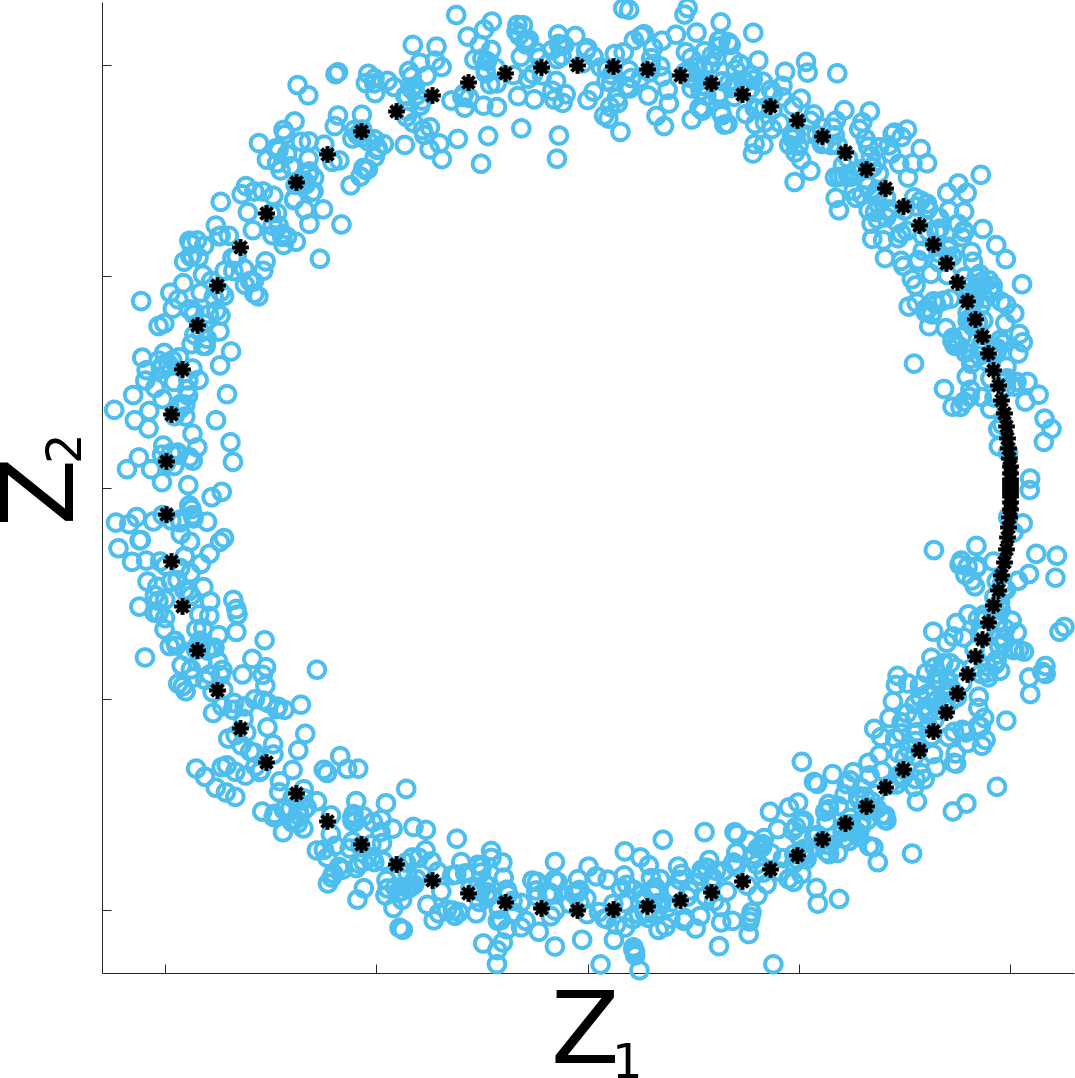}}
\subfigure[] {\label{fig:circZt1} \includegraphics[width=0.22\textwidth] {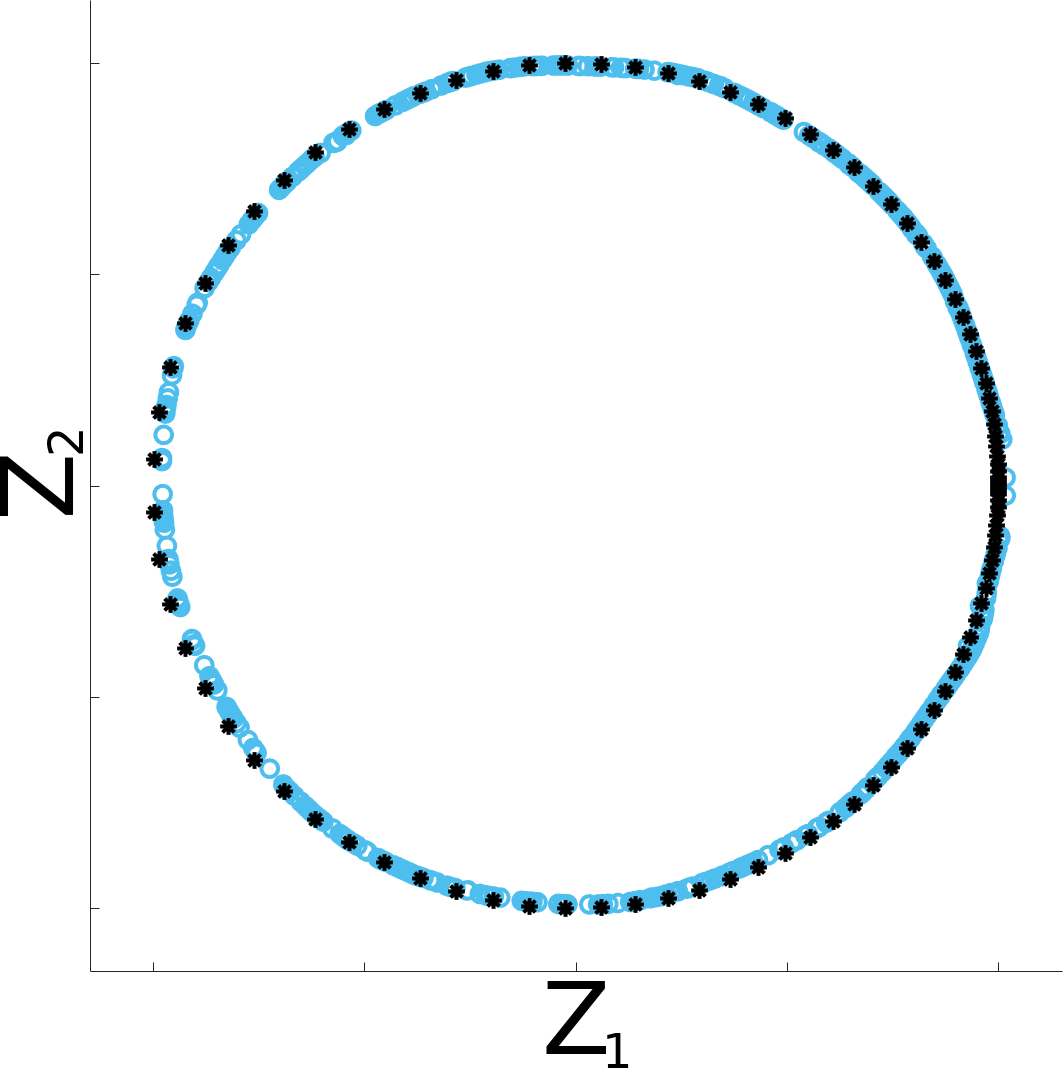}} 
\subfigure[] {\label{fig:p_val} \includegraphics[width=0.22\textwidth] {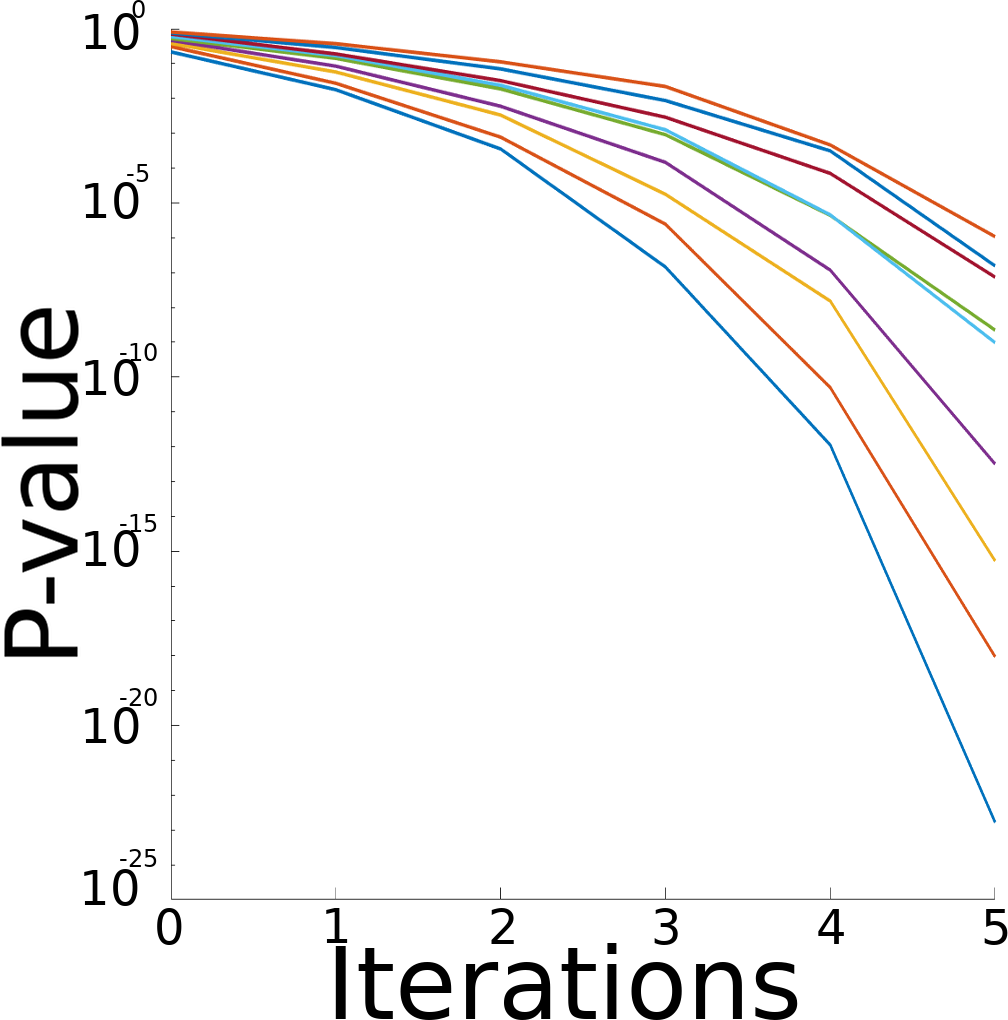}}  

\caption{Density equalization demonstrated on a circle shaped manifold. \subref{fig:circ1} The original non-uniform samples of $\myvec{X}$.  \subref{fig:circZt0} The original points $\myvec{X}$ (black asterisks) and set of new generated points $\myvec{Y}_0$ (blue circles). \subref{fig:circZt1} The final set of points $\myvec{Z}$, original points $\myvec{X}$ (black asterisks) and set of new generated points $\myvec{Y}_t$ (blue circles). In this example only one diffusion time step is required (i.e. $t=1$). \subref{fig:p_val} The p-values of a Kolmogorov-Smirnov (K-S) test comparing to a uniform distribution, the x-axis represents the number of SUGAR iterations ran.}
\end{figure}

    \color{black}
    
\subsection{Artificial Manifolds}
In the next experiment, we evaluate SUGAR on a Swiss roll, a popular manifold widely used for non-linear dimensionality reduction. We apply SUGAR to a non-uniformly generated Swiss roll. The Swiss roll $\myvec{X}$ is generated by applying the following construction steps:
\begin{itemize}
\item The random variables $\theta_i,h_i, i=1,...,N$, are drawn from the distributions ${P(\theta)},P(h)$ respectively.
\item A 3-dimensional Swiss roll is constructed based on the following function \begin{equation}{
	\label{eq:Swiss}
	\myvec{X}_i=
	\begin{bmatrix}
	{x_i}^{1}\\
	{x_i}^{2}\\
	{x_i}^{3}\\	
	\end{bmatrix}
	=
	\begin{bmatrix}
	{6\theta_i\cos(\theta_i)}\\
	{h_i}\\
	{6\theta_i\sin(\theta_i)}\\
	
	\end{bmatrix},i=1,...,N.
}
\end{equation} 
\end{itemize}
In Fig. \ref{fig:SwissNonUniform} we show a Swiss roll where $\theta_i,i=1,...,N$ is drawn from a nonuniform distribution within the range $[\frac{3\cdot \pi}{2},\frac{9\cdot \pi}{2}]$. Density of points decreases at higher values of $\theta_i$. The values $h_i,i=1,...,N$ are drawn from a uniform distribution within the range $[0,20]$ and we use $N=600$ points. %In Figs. \ref{fig:DMUniform} and \ref{fig:DMNonUniform} we present the first two DM coordinates of the Swiss Rolls, followed by empirical CDF estimates in Fig. \ref{fig:CDF1} and \ref{fig:CDF2} correspondingly.  

\begin{figure}[!hbt]
\centering
\subfigure[] {\label{fig:SwissNonUniform} \includegraphics[width=0.32\textwidth] {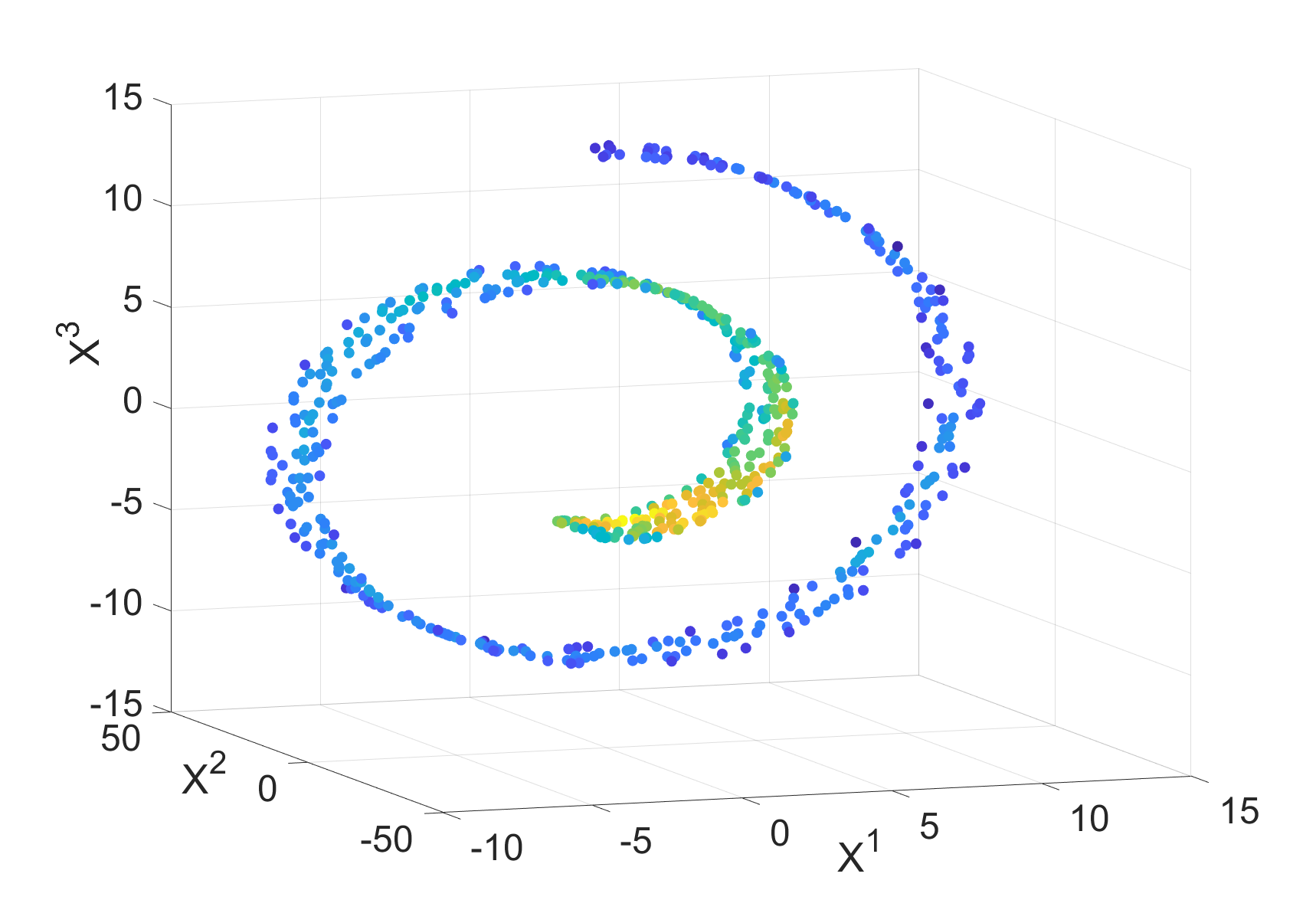}}
\subfigure[ ]{\label{fig:Magict1} \includegraphics[width=0.32\textwidth] {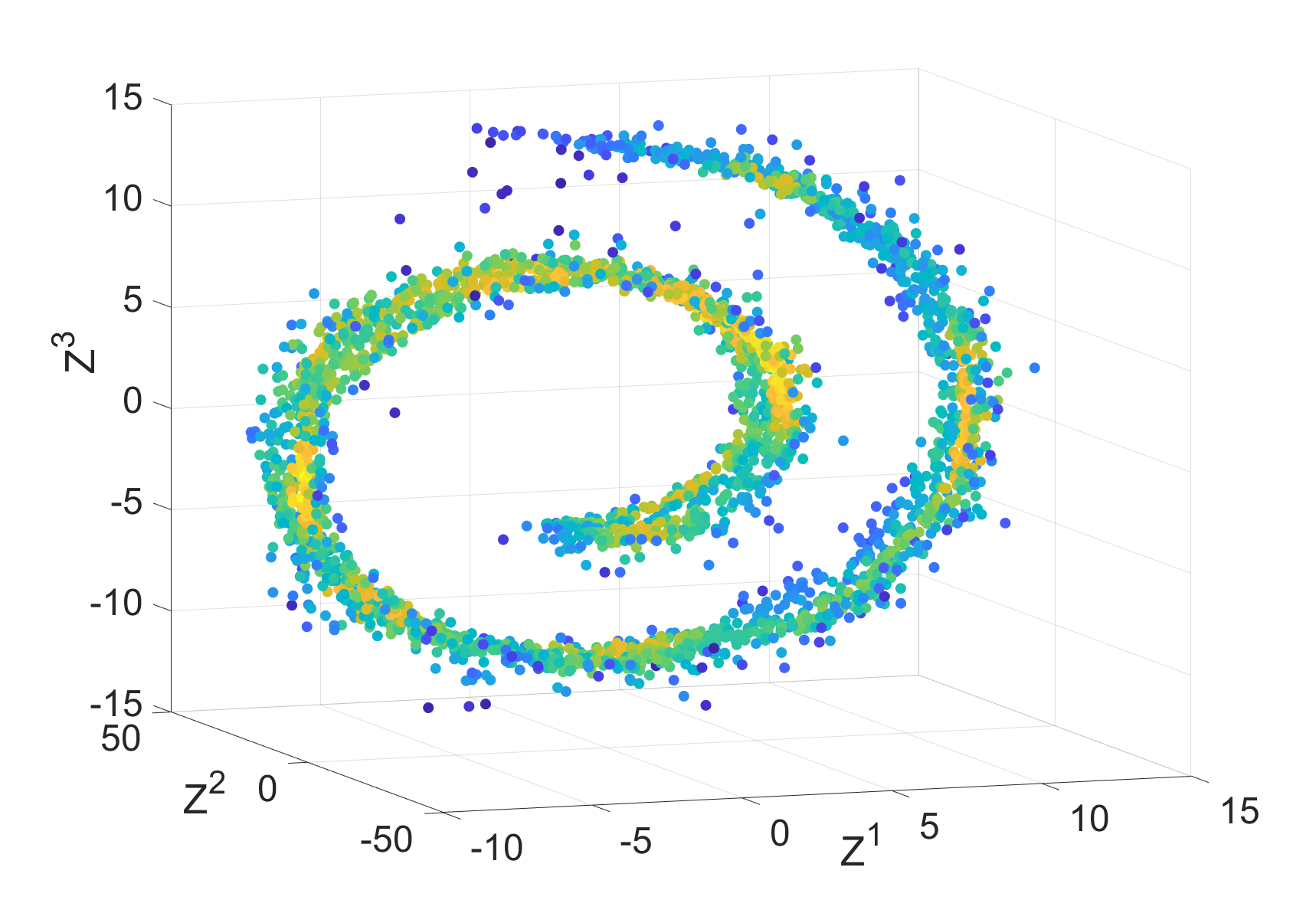}}
\subfigure[] {\label{fig:Magict10} \includegraphics[width=0.32\textwidth] {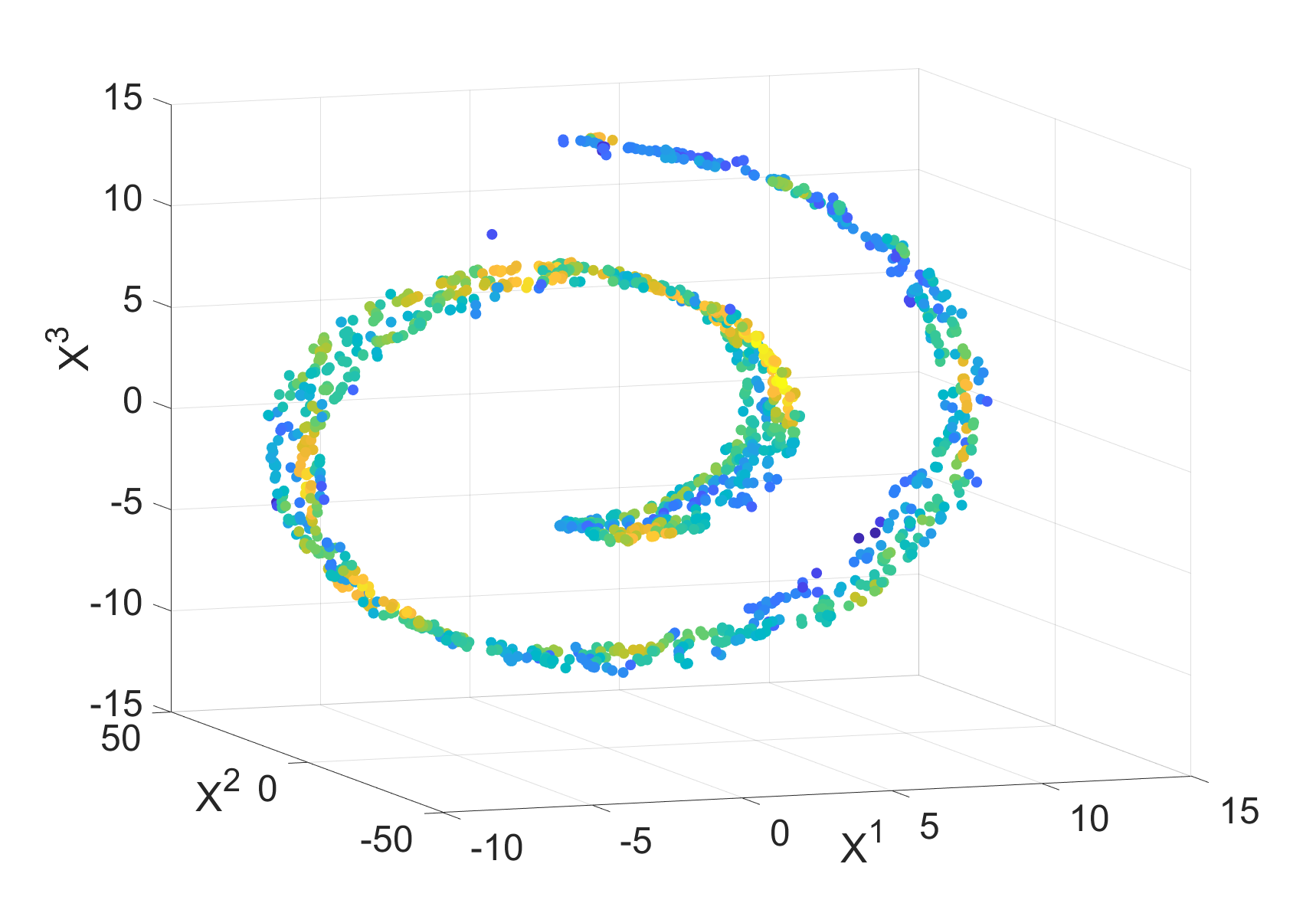}}
\subfigure[] {\label{fig:MagicCDF} \includegraphics[width=0.32\textwidth]  {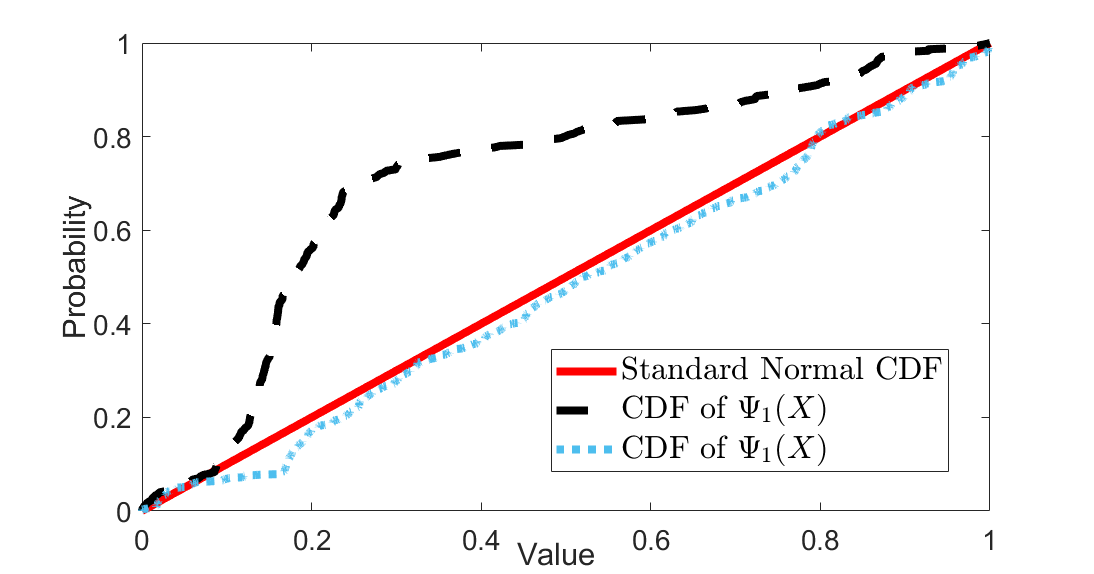}}
\caption{\subref{fig:SwissNonUniform}  A Swiss roll generated using Eq. \ref{eq:Swiss}. The variables $h_i$ are drawn from uniform distribution within the interval $ [0,100]$, while $\theta_i$ are drawn from a nonuniform distribution. Points are colored by their degree value $\hat{d}(i)$. \subref{fig:Magict1} The Swiss roll $\myvec{X}$ (black asterisk) and new generated points $\myvec{Y}_0$ (blue circle). Points are generated based on step 4 in Alg. \ref{alg:DataGeneration}. \subref{fig:Magict10} The points after applying the MGC operator $\hat{\myvec{P}}^t$ (steps 6-7 Alg. \ref{alg:DataGeneration}). Only one diffusion time step is required (i.e. $t=1$). \subref{fig:MagicCDF} An estimate of the CDF based on the first embedding (DM) coordinate of $\myvec{X} \cup \myvec{Y}$.  }
	\label{SwissMagicA}
\end{figure}
The Swiss roll is 3-dimensional but governed by one angular parameter. Thus, it allows us to evaluate whether SUGAR generates points uniformly along the Swiss roll's angular parameter. In the following, we apply Alg. \ref{alg:DataGeneration} to the Swiss roll presented in Fig. \ref{fig:SwissNonUniform}. In Fig. \ref{fig:Magict1} we present the original data $\myvec{X}$ and the generated points $\myvec{Y}_0$. The points are generated based on steps 1-4 in Alg. \ref{alg:DataGeneration}. After applying the MGC diffusion operator the points are pulled toward the Swiss roll structure. The points after applying the MGC operator $\hat{\myvec{P}}$ are presented in Fig. \ref{fig:Magict10}. In these figures the points are colored by their degree value $\hat{d}(i)$. 
In Fig. \ref{fig:MagicCDF} we present the estimated CDF before and after applying SUGAR to $\myvec{X}$. The resulted CDF resembles the CDF of a uniform distribution plotted in red. The variance of the degree $\hat{d}(i)$ drops from 0.19 to 0.09 when comparing $\myvec{X}$ and $\myvec{Z}$.

\subsection{Bunny Manifold}
In the following experiment, we evaluate the performance of SUGAR on the ``Stanford Bunny''. The ``Stanford Bunny'' is a set of points representing a 3-d surface of a bunny. We subsample the points on the surface such that the unevenly sampled set of points $\myvec{X}$ consists of 900 points. The data set $\myvec{X}$ colored by the degree estimate at each point is presented in Fig. \ref{fig:bunny_kde}. Then, we generate a set of new points $\myvec{Y}_0$ based on steps 1-4 of Algorithm \ref{alg:DataGeneration}. The unified set of points $\myvec{X} \cup \myvec{Y}_0$ is presented in Fig. \ref{fig:bunny_t0}. In the final step, we apply the MGC operator $\myvec{P}^t$, at $t=1$, the final set of points $\myvec{Z}=\myvec{X} \cup \myvec{Y}$ is presented in Fig. \ref{fig:bunny_t1}. The set $\myvec{Z}$ colored by the degree value $\hat{d}(i)$ is presented in Fig. \ref{fig:bunny_kde_z}.

\begin{figure}[!hbt]
	\centering
    \subfigure[ ]{\label{fig:bunny_kde} \includegraphics[width=0.32\textwidth] {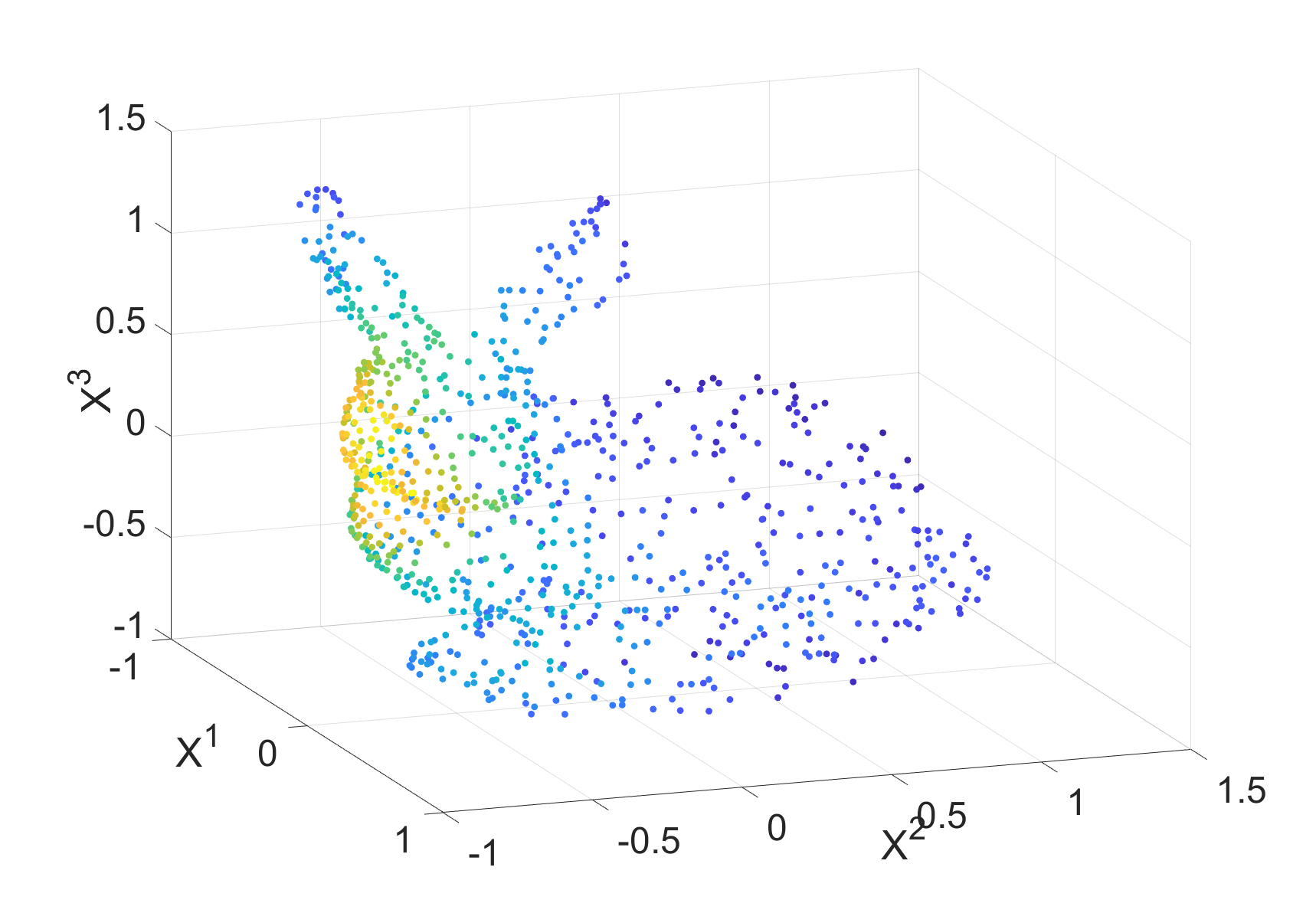}}
	\subfigure[ ]{\label{fig:bunny_t0}. \includegraphics[width=0.32\textwidth] {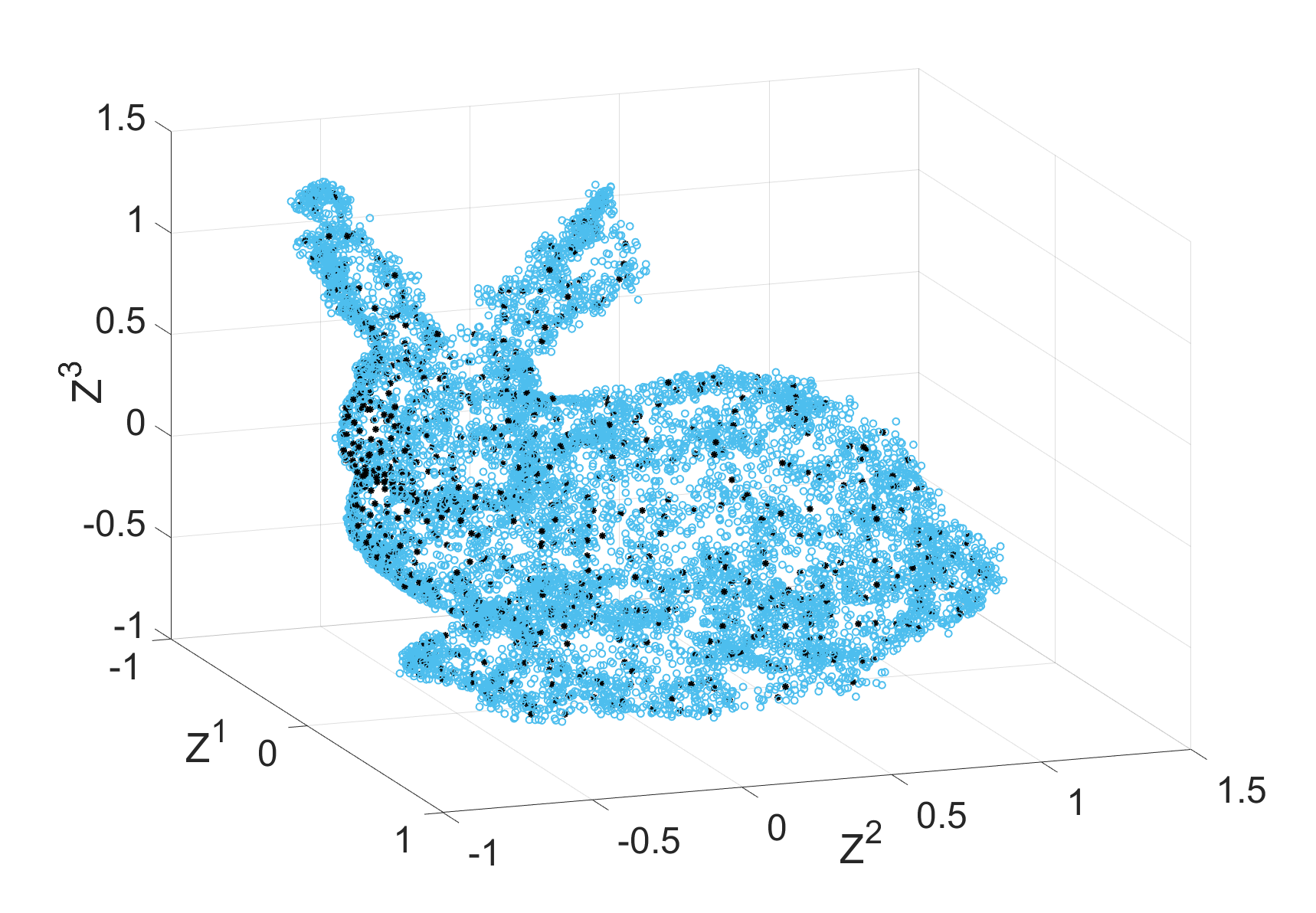}}
\subfigure[ ]{\label{fig:bunny_t1} \includegraphics[width=0.32\textwidth] {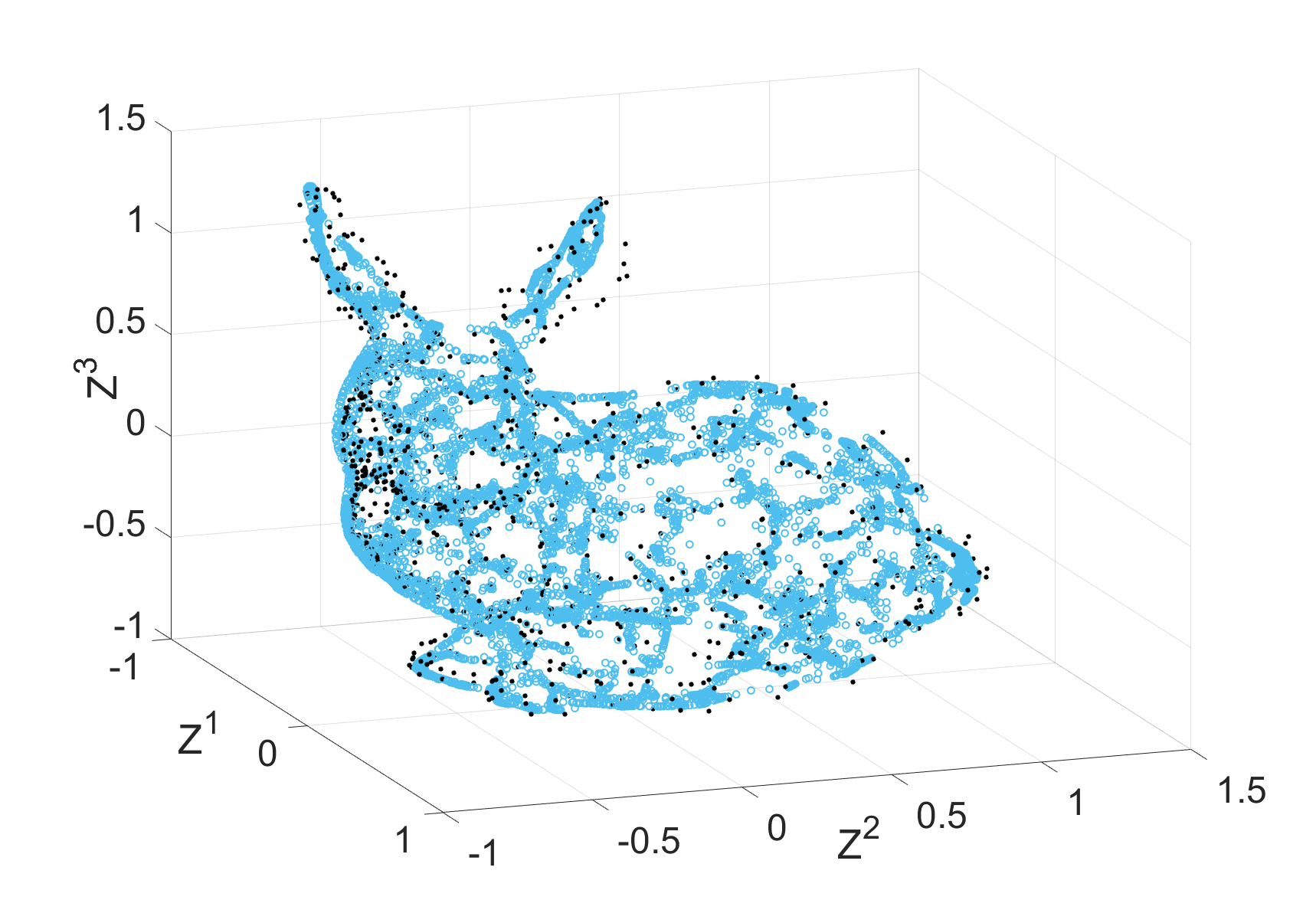}}
\subfigure[ ]{\label{fig:bunny_kde_z} \includegraphics[width=0.32\textwidth] {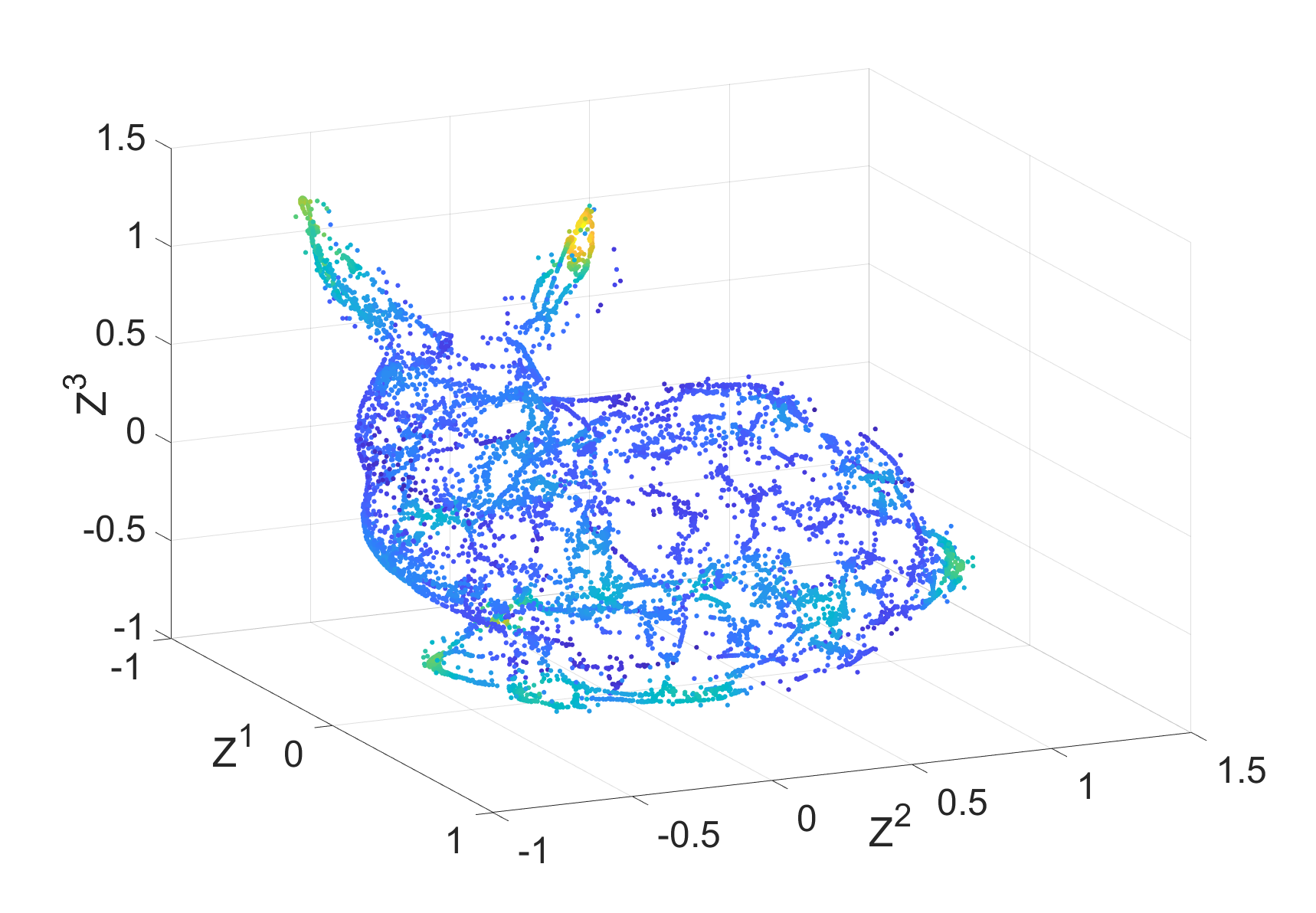}}

	\caption{\subref{fig:bunny_kde} A subset $\myvec{X}$ with 900 points from the Stanford Bunny. Points are colored by their degree value $\hat{d}(i)$. \subref{fig:bunny_t0} The original and generated points $\myvec{X} \cup \myvec{Y}_0$, black asterisks- $\myvec{X}$ original points, blue circles - new points $\myvec{Y}_0$. \subref{fig:bunny_t1} The original $\myvec{X} $ and the set $\myvec{Y}$. $\myvec{Y}$ is the set of generated points after applying the MGC diffusion operator $\hat{\myvec{P}}$ (steps 6-7 of Alg. \ref{alg:DataGeneration}). \subref{fig:bunny_kde_z} The final set again colored by the degree value $\hat{d}(i)$ at each point. }
	\label{fig:bunny}
\end{figure}
In this experiment, the output of SUGAR has preserved the complex structure of the 3-d Bunny manifold. The variance of the degree $\hat{d}(i)$ in this example, drops from 0.24 to 0.019 when comparing $\myvec{X}$ and $\myvec{Z}$.

\subsection{MNIST Manifold}
\label{sec:MnistMani}
%\begin{figure}[!htb]
	%\centering 
   % \subfigure[Original]{\label{fig:6_orig} \put(10,60){Title 1} \includegraphics[width=0.2\textwidth]{}} 
%\subfigure[Example from VAE]{\label{fig:6_vae} \put(10,60){Title 2} \includegraphics[width=0.2\textwidth]{}}\\
 %\subfigure[Example from $\myvec{Y}_0$]{\label{fig:6_t0}\put(10,60){Title 1} \includegraphics[width=0.2\textwidth]{}} 
%\subfigure[Example from $\myvec{Y}_1$]{\label{fig:6_t1}\put(10,60){Title 2} \includegraphics[width=0.2\textwidth]{}}

%\caption{\subref{fig:6_orig} The input image of a handwritten digit $\myvec{x}_0 \in \mathbb{R}^{30 \times 30}$. \subref{fig:6_vae} An example of an image generated using a Variational Auto Encoder (VAE). \subref{fig:6_t0} An example of a synthesized image, drawn from a Gaussian distribution as described in step 4 of Alg.~\ref{alg:DataGeneration}. \subref{fig:6_t1} The same image as in \subref{fig:6_t0} after applying the MGC diffusion operator $\hat{\myvec{P}}$ (steps 6-7 of Alg.~\ref{alg:DataGeneration}).}
%\label{fig:6}
%\end{figure}
\label{sec:mnist}
In the following experiment we empirically demonstrate ability of SUGAR to fill in missing samples and compare it to a Variational Autoencoder (VAE)~\citep{kingma2013auto} which has an implicit probabilistic model of the data.  Note that we are not able to use other density estimates in general due to the high dimensionality of datasets and the inability of density estimates to scale to high dimensions.  To begin, we rotated an example of a handwritten `6' from the MNIST dataset in $N=320$ different angles non-uniformly sampled over the range $[0,2\pi]$.  This circular construction was recovered by the diffusion maps embedding of the data, with points towards the undersampled regions having a lower degree than other regions of the embedding (Fig.~\ref{fig:Mnist}, left, colored by degree). We then generated new points around each sample in the rotated data according to Alg.~\ref{alg:DataGeneration}. We show the results of SUGAR before and after diffusion in Fig.~\ref{fig:Mnist} (top and bottom right, respectively).

\begin{figure}[!b]
	\centering
   \includegraphics[width=0.75\textwidth] {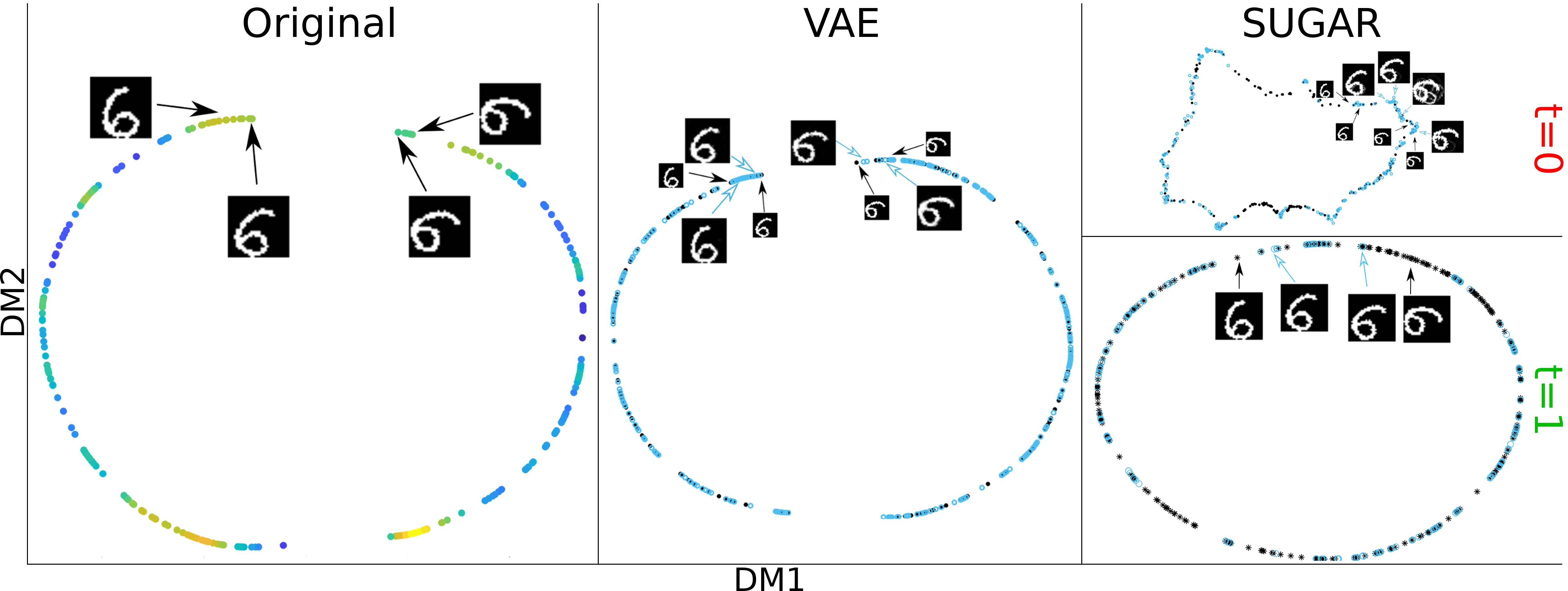}
% 	\subfigure[ ]{\label{fig:mnistDM} \includegraphics[width=0.22\textwidth] {}}
%     \subfigure[ ]{\label{fig:mnistVAE}  \includegraphics[width=0.22\textwidth] {}}
% \subfigure[ ]{\label{fig:mnistZt0}  \includegraphics[width=0.22\textwidth] {}}
% \subfigure[ ]{\label{fig:mnistZt1}  \includegraphics[width=0.22\textwidth] {}}

	\caption{A 2-dimensional DM representation of: (left) Original data, 320 rotated images of handwritten '6' colored by the degree value $\hat{d}(i)$. (middle) VAE output; (right, top) SUGAR augmented data before diffusion (i.e. $t=0$); (right,bottom)  SUGAR augmented data with one step of diffusion ($t=1$).  Black asterisks - original data; Blue circles - output data. }
	\label{fig:Mnist}
\end{figure}

Next, we compared our results to a 2-layer VAE trained over the original data (Fig.~\ref{fig:Mnist}, middle).  Both SUGAR ($t=1$) and the VAE generated points along the circular structure of the original manifold. Examples images from both techniques are presented in Fig. \ref{fig:Mnist}.  Notably, the VAE generated images similar to the original angle distribution, such that sparse regions of the manifold were not filled.  In contrast, points generated by SUGAR occupied new angles not present in the original data but clearly present along the circular manifold.  This example illustrates the ability of SUGAR to recover sparse areas of a data manifold.

\subsection{Classification of Imbalanced Data}
The loss functions of many standard classification algorithms are global;  these algorithms are thus easily biased when trained on imbalanced datasets. Imbalanced training typically manifests in poor classification of rare samples. These rare samples are often important~\citep{weiss2004mining}. For example, the preponderance of healthy individuals in medical data can obscure the diagnosis of rare diseases.

Resampling and boosting strategies have been used to combat data imbalance. Removing points (undersampling) is a simple solution, but this strategy leads to information loss and can decrease generalization performance.  RUSBoost~\citep{rusboost} combines this approach with boosting, a technique that resamples the data using a set of weights learned by iterative training.  Oversampling methods remove class imbalance by generating synthetic data alongside the original data. Synthetic Minority Over-sampling Technique (SMOTE)~\citep{smote} oversamples by generating points along lines between existing points of minority classes.  

We compared SUGAR, RUSBoost, and SMOTE for improving k-NN and kernel SVM classification of 60 imbalanced data sets of varying size (from hundreds to thousands) and imbalance ratio (1.8--130) \citep{alcala2009keel}.  $\text{Precision}=\frac{TP}{TP+FP}$ and $\text{Recall}=\frac{TP}{TP+FN}$ were chosen to quantify classification performance. These metrics capture classification accuracy in light of data imbalance; precision measures the fraction of true positives to false positives, while recall measures the fraction of true positives identified. These measures are extended to multiple classes via average class precision and recall (ACP and ACR), which are defined as 
$\text{ACP}=\sum^C_{c=1}{\text{Precision}(class=c)}/C$ and $\text{ACR}=\sum^C_{c=1}{\text{Recall}(class=c)}/C$. 
These measure ignore class population biases by equally weighting classes.   This experiment is summarized in Table~\ref{table1} (see appendix for full details).

% Table generated by Excel2LaTeX from sheet 'Sheet1'
% Table generated by Excel2LaTeX from sheet 'Sheet1'
\begin{table}[!b]
  \centering
    \begin{tabular}{|l|rrrrrr|r|}
    \hline
          &       & \multicolumn{1}{l}{\textbf{Knn}} &       &       & \multicolumn{1}{l}{\textbf{SVM}} &       & \multicolumn{1}{l|}{\textbf{RusBOOST}} \\
          \hline
          & \multicolumn{1}{l}{\textbf{Orig}} & \multicolumn{1}{l}{\textbf{SMOTE}} & \multicolumn{1}{l|}{\textbf{SUGAR}} & \multicolumn{1}{l}{\textbf{Orig}} & \multicolumn{1}{l}{\textbf{SMOTE }} & \multicolumn{1}{l|}{\textbf{SUGAR}} &  \\
          \hline
    \textbf{ACP} & 0.67  & 0.76  & \textbf{0.78}  & 0.77  & 0.77  & 0.77 & 0.75 \\
    \textbf{ACR} & 0.64  & 0.73  & \textbf{0.77}  & 0.78  & 0.78  & \textbf{0.84} & 0.81 \\
    \hline

    \end{tabular}%
\caption{Average class precision (ACP) and class recall (ACR) for KNN (10-fold cross validation) and kernel SVM classifiers before and after SMOTE, SUGAR, and RusBOOST(SVM only) }
  \label{table1}%
\end{table}%

% Table generated by Excel2LaTeX from sheet 'Sheet1'
% Table generated by Excel2LaTeX from sheet 'Sheet1'

\color{black}
\subsection{Clustering of Imbalanced Data}
In order to examine the effect of SUGAR on clustering, we performed spectral clustering on a set of Gaussians in the shape of the word ``SUGAR'' (top panel, Fig.~\ref{fig:sugar_manifold}). Next, we altered the mixtures to sample heavily towards points on the edges of the word (middle panel, Fig.~\ref{fig:sugar_manifold}). This perturbation disrupted letter clusters.  Finally, we performed SUGAR on the biased data to recreate data along the manifold.  The combined data and its resultant clustering is shown in the bottom panel of Fig.~\ref{fig:sugar_manifold} revealing that the letter clustering was restored after SUGAR.

The effect of sample density on spectral clustering is evident in the eigendecomposition of the graph Laplacian, which describes the connectivity of the graph and is the basis for spectral clustering. We shall focus on the multiplicity of the zero eigenvalue, which codes for the number of connected components of a graph.  In our example, we see that the zero eigenvalue for the ground truth and SUGAR graphs has a multiplicity of $5$ whereas the corrupted graph only has a multiplicity of $4$ (see Fig.~\ref{fig:sugar_eigenvalues}).  This connectivity difference arises from the $k$-neighborhoods of points in each ground truth cluster.  We note that variation in sample density disrupts the $k$-neighborhood of points in the downsampled region to include points outside of their ground truth cluster. These connections across the letters of ``SUGAR'' thus lead to a lower multiplicity of the zero eigenvalue, which negatively affects the spectral clustering. Augmenting the biased data via SUGAR equalizes the sampling density, restoring ground-truth neighborhood structure to the graph built on the data. 
\begin{figure}[!b]
\centering
\subfigure[ ]{\label{fig:sugar_manifold} \includegraphics[width=0.26\textwidth] {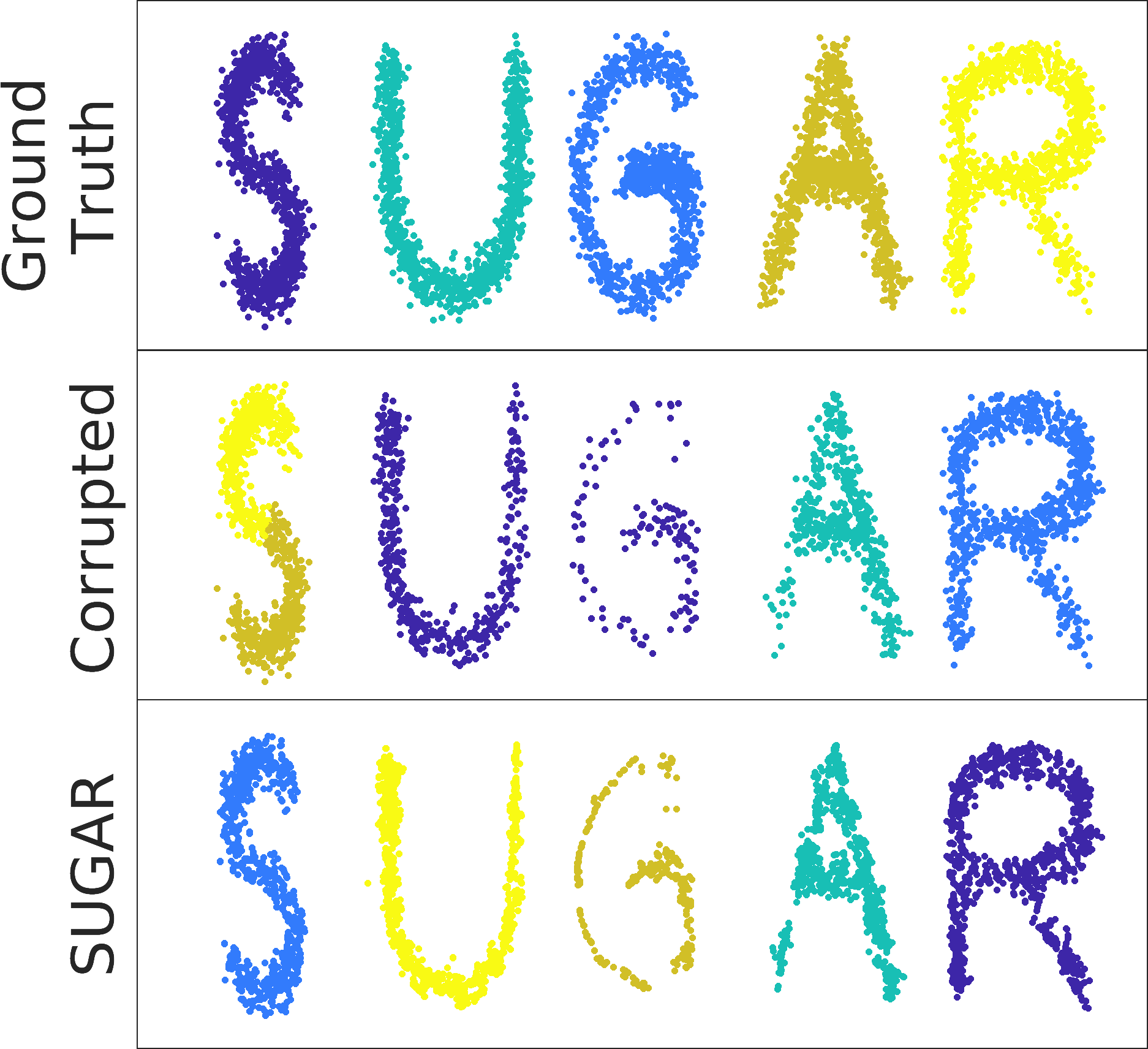}} 
\subfigure[ ]{\label{fig:sugar_eigenvalues} \includegraphics[width=0.33\textwidth] {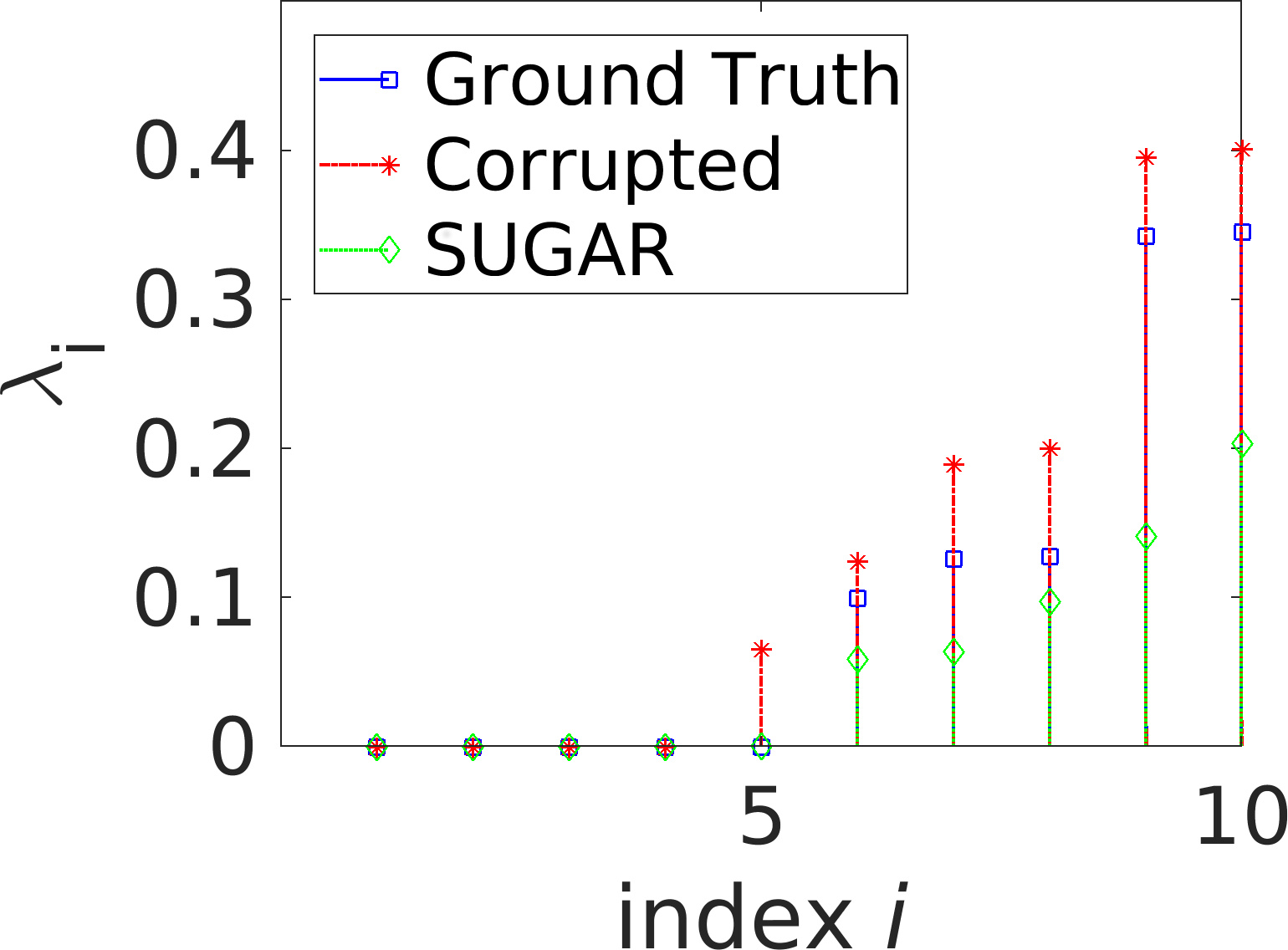}}
\subfigure[ ]{\label{fig:sugar_ri} \includegraphics[width=0.25\textwidth] {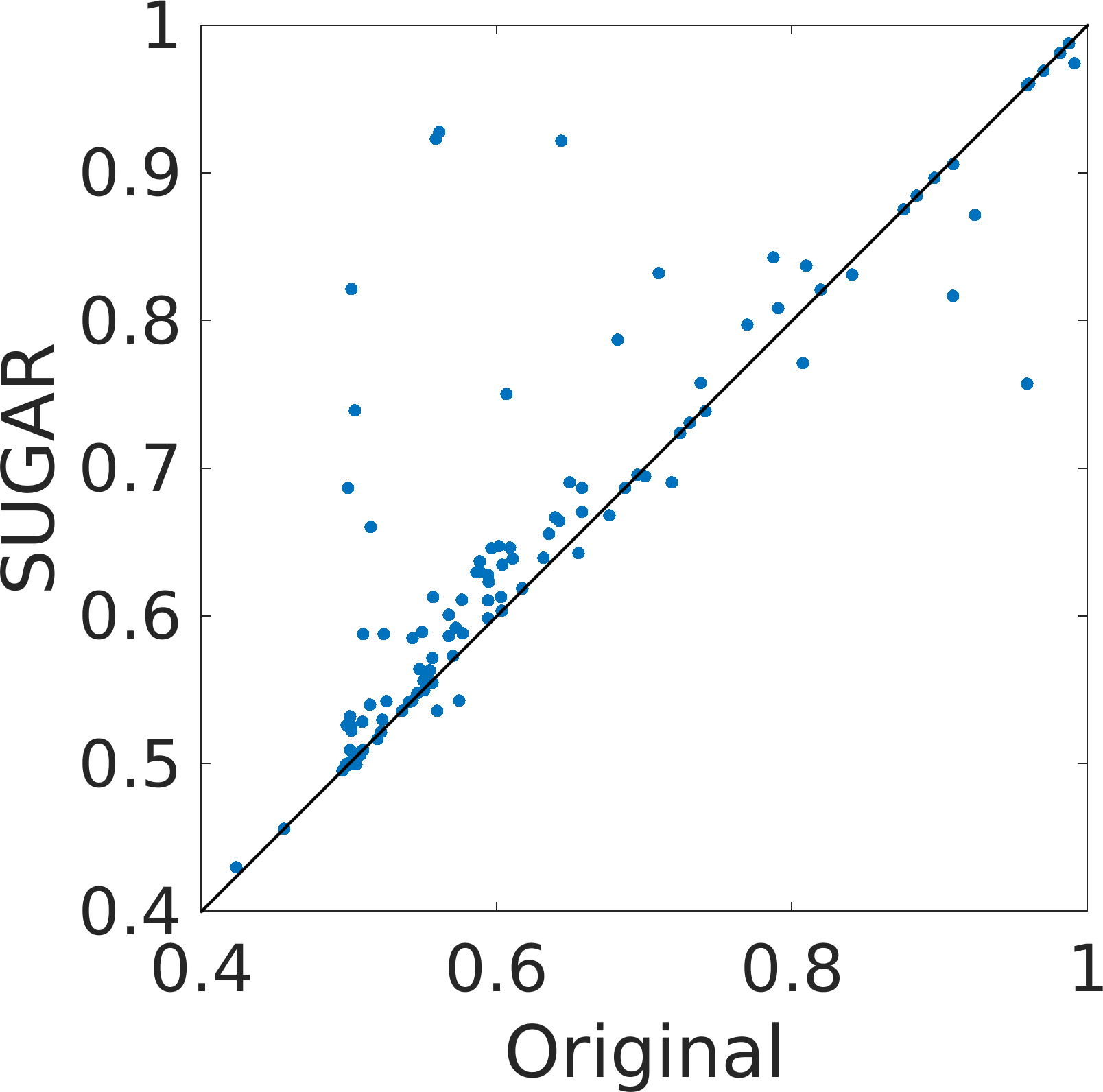}}

\caption{Augmented clustering using SUGAR. \subref{fig:sugar_manifold} Spectral clustering of a mixed Gaussian (top panel) with uneven sample density(middle panel). After SUGAR(bottom panel), the original cluster geometries are recovered.  \subref{fig:sugar_eigenvalues} Graph Laplacian eigenvalues from \subref{fig:sugar_manifold}. The corrupted graph has a lower multiplicity of the 0 eigenvalue, indicating fewer components. \subref{fig:sugar_ri} Rand Index of 115 data sets clustered by k-means before and after applying SUGAR. ~\citep{alcala2009keel}}
\end{figure}

Next, we explored the effects of SUGAR on traditional k-means across 115 datasets obtained from~\citet{alcala2009keel}. K-means was performed using the ground truth number of clusters, and the Rand Index (RI)~\citep{hubert1985comparing} between the ground truth clustering and the empirical clustering was taken (Fig.~\ref{fig:sugar_ri}, x-axis).  Subsequently, SUGAR was used to generate new points for clustering with the original data. The RI over the original data was again computed, this time using the SUGAR clusters (Fig.~\ref{fig:sugar_ri}, y-axis).  Our results indicate the SUGAR can be used to improve the cluster quality of k-means. 

\subsection{Biological Manifolds}
Next, we used SUGAR for exploratory analysis of a biological dataset.  In~\citet{velten2017human}, a high dimensional yet small ($\mathbf{X} \in \mathbb{R}^{1029\times12553}$) single cell RNA sequencing dataset was collected to elucidate the development of human blood cells, which is posited to form a continuum of development trajectories from a central reservoir of immature cells.  This dataset thus represents an ideal substrate to explore manifold learning~\citep{moon2017manifold}. However, the data presents two distinct challenge due to 1) undersampling of cell types and 2) dropout and artifacts associated with single cell RNA sequencing \citep{kim2015characterizing}.  These challenges stand at odds with a central task of computational biology, the characterization of gene-gene interactions which foment phenotypes.

We first sought to enrich rare phenotypes in the Velten data by generating $\mathbf{Y} \in \mathbb{R}^{4116\times12553}$ new points with SUGAR.  A useful tool for this analysis is the 'gene module', a pair or set of genes that are expressed together to drive phenotype development.  K-means clustering of the augmented data over 14 principal gene modules (32 dimensions) revealed 6 cell types described in~\citet{velten2017human} and a 7th cluster consisting of mature B-cells (Fig. \ref{fig:populationembedding}).  Analysis of population prevalence before and after SUGAR revealed a dramatic enrichment of mature B and pre-B cells, eosinophil/basophil/mast cells (EBM), and neutrophils (N), while previously dominant megakaryocytes (MK) became a more equal portion of the post-SUGAR population (Fig. \ref{fig:populationbar}).  These results demonstrate the ability of SUGAR to balance population prevalence along a data manifold.

In Fig. \ref{fig:bioregression}, we examine the effect of SUGAR on intra-module relationships.  Because expression of genes in a module are molecularly linked, intra-module relationships should be strong in the absence of sampling biases and experimental noise.  After SUGAR, we note an improvement in linear regression ($r^2$) and scaled mutual information coefficients.  We note that in some cases the change in mutual information was stronger than linear regression, likely due to nonlinearities in the module relationship. Because this experiment was based on putative intra-module relationships we next sought to identify strong improvements in regression coefficients \textit{de novo}.  To this end, we compared the relationship of the B cell maturation marker CD19 with the entire dataset before and after SUGAR.  In Fig. \ref{fig:biomi} we show three relationships with marked improvement from the original data~(top panel) to the augmented data~(bottom panel).  The markers uncovered by this search, HOXA3, CASP1, and EAF2, each have disparate relationships with CD19.  HOXA3 marks stem cell immaturity, and is negatively correlated with CD19.  In contrast, CASP1 is known to mark commitment to the B cell lineage~\citep{velten2017human}.  After SUGAR, both of these relationships were enhanced.  EAF2 is a part of a module that is expressed during early development of neutrophils and monocytes; we observe that its correlation and mutual information with B cell maturation are also increased after SUGAR.  We note that in light of the early development discussed by~\citet{velten2017human}, this new relationship seems problematic. In fact,~\citet{li2016eaf2} showed that EAF2 is upregulated in mature B cells as a mechanism against autoimmunity.  Taken together, our analyses show that SUGAR is effective for bolstering relationships between dimensions in the absence of prior knowledge for exploratory data analysis. 

\begin{figure}[!tb]
\centering
\subfigure[ ]{\label{fig:populationembedding} \includegraphics[width=0.29\textwidth] {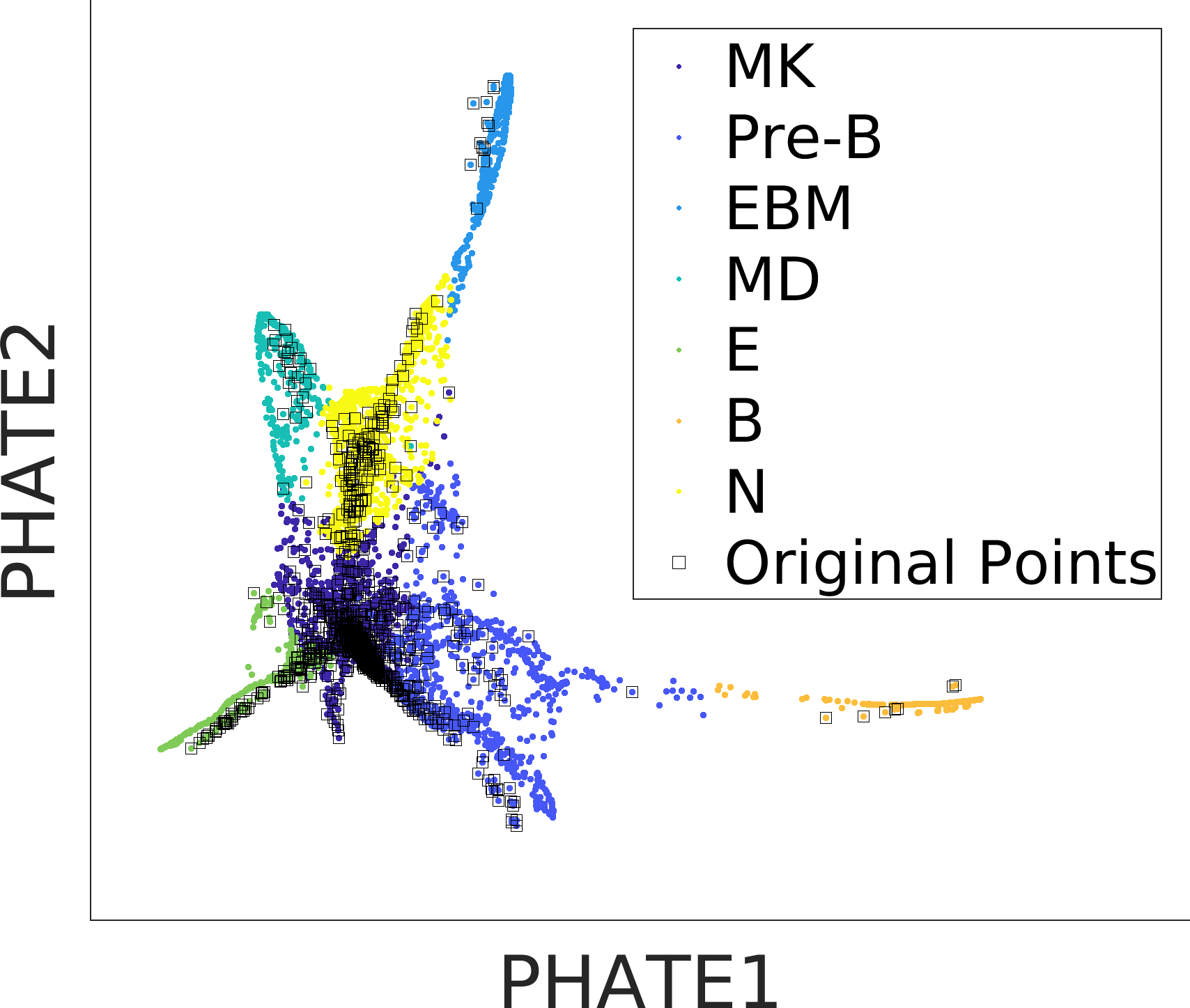}} 
\subfigure[ ]{\label{fig:populationbar} 
\includegraphics[width=0.29\textwidth] {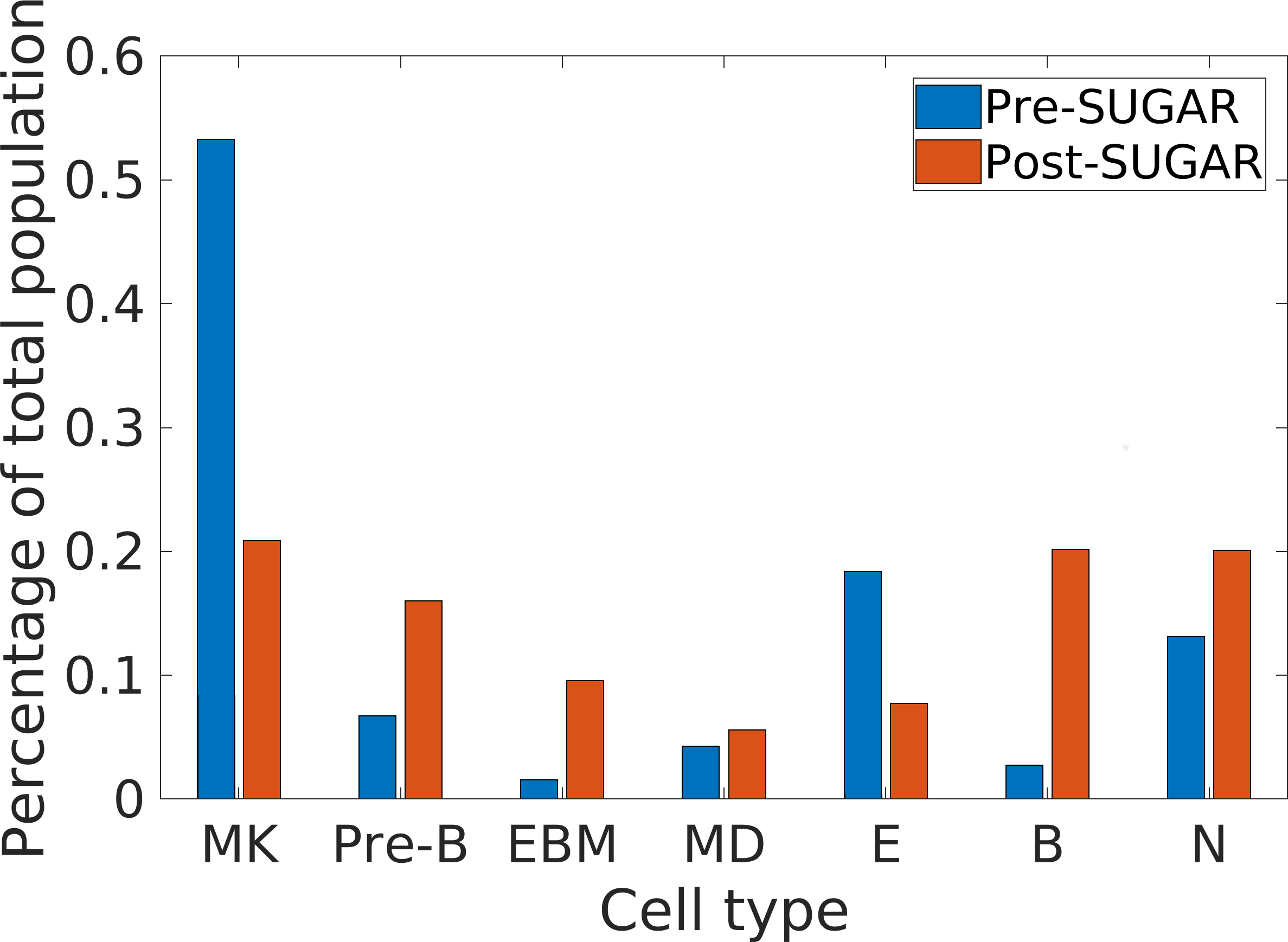}}\\
\subfigure[ ]{\label{fig:bioregression} \includegraphics[width=0.29\textwidth] {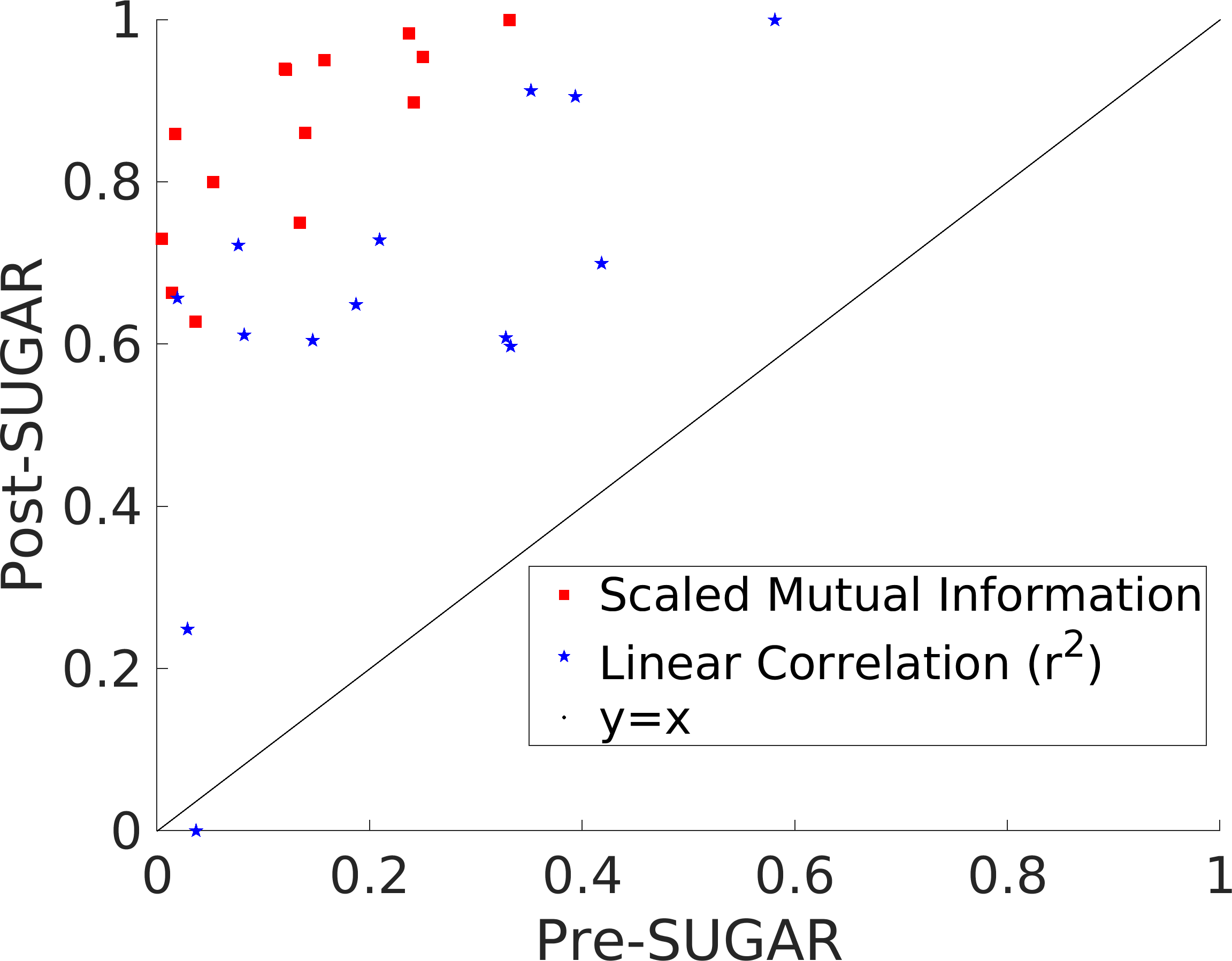}}
\subfigure[ ]{\label{fig:biomi} \includegraphics[width=0.29\textwidth] {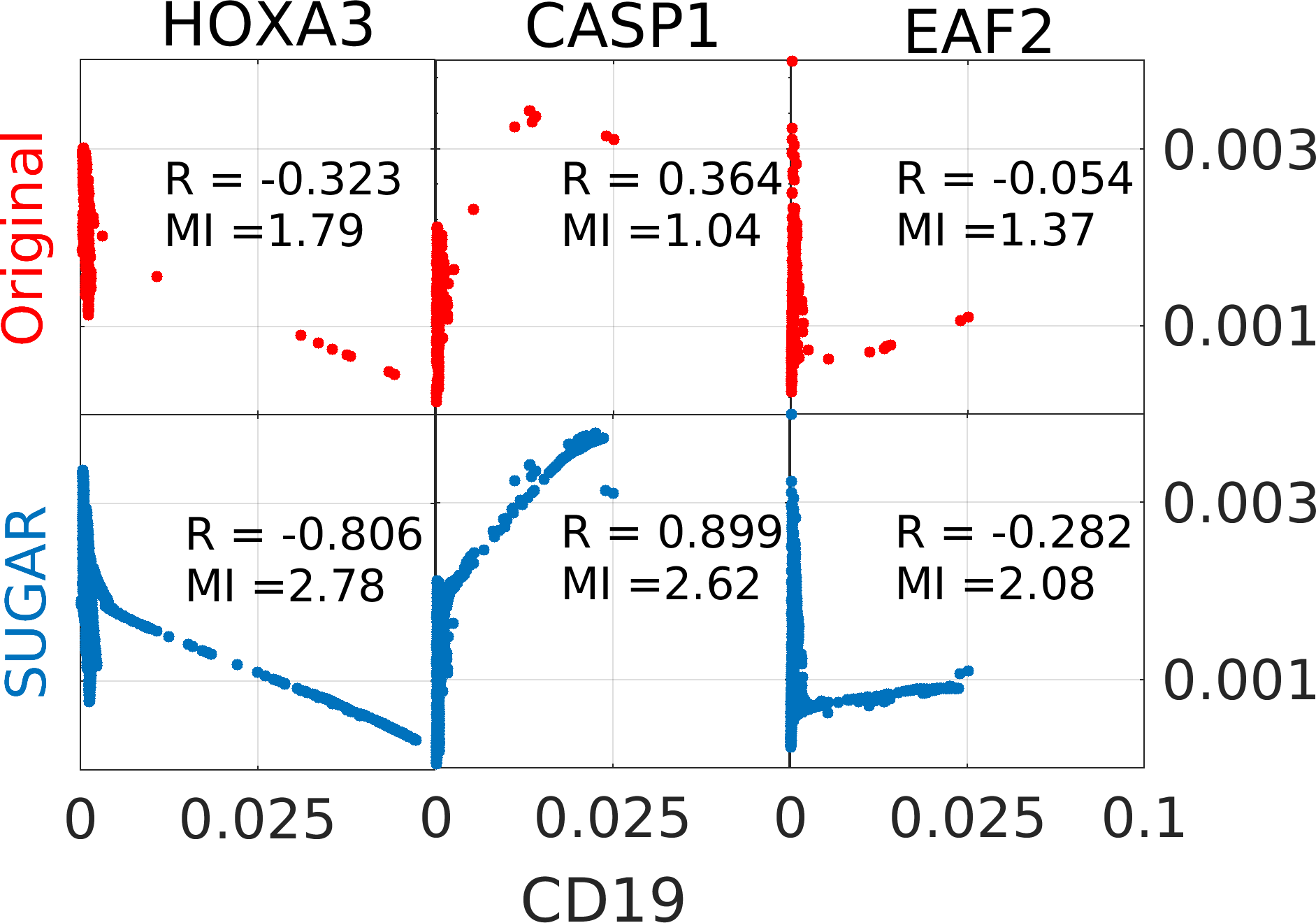}}

\caption{SUGAR was used to augment a biological dataset collected by~\citet{velten2017human}. \ref{fig:populationembedding} Augmented data embedded with PHATE \citep{moon2017visualizing} and colored by k-means over the gene module dimensions identified by~\citet{velten2017human}.  7 canonical cell types are present.  EBM: eosinophil/basophil/mast cells; N: neutrophils; MD:
monocytes/dendritic cells; E: erythroid cells; MK: megakaryocytes; Pre-B: immature B cell; B: mature B cell. \ref{fig:populationbar} Cell type prevalence before and after applying SUGAR. \ref{fig:bioregression} Explained variation (linear regression) and scaled mutual information ($\frac{\text{MI}_i}{\max{\text{MI}}}$) between the components of the 14 coexpression modules identified by~\citet{velten2017human}. \ref{fig:biomi} Relationship between CD19 (a B cell maturation marker) and HOXA3 (a cell immaturity marker), CASP1 (a B cell linear commitment marker), and EAF2 (a marker for two non-B cell types)}
\end{figure}

\section{Conclusion}
We proposed SUGAR as a new type of generative model for data, based on learning geometry rather than the density of the data. This enables us to compensate for sparsity and heavily biased sampling in many data types of interest, especially biomedical data. We assume that the training data lies on a low-dimensional manifold. The manifold assumption is usually valid in many datasets (e.g., single cell RNA sequencing~\citep{moon2017manifold}) as they are globally high-dimensional but locally generated by a small number of factors. We use a diffusion kernel to capture the manifold structure. Then, we generate new points along the incomplete manifold by randomly generating points at sparse areas. Finally, we use a weighted transition kernel to pull the new points towards the structure of the manifold. The method demonstrated promising results on synthetic data, MNIST images, and high dimensional biological datasets in applications such as clustering, classification and mutual information relationship analysis. Future work will apply SUGAR to study extremely biased biological datasets and improve classification and regression performance on them. 

\bibliography{Bib}
\bibliographystyle{icml2018}

\section{Appendix}
% Table generated by Excel2LaTeX from sheet 'Sheet1'
%\begin{longtable}[]
	%\centering
%	\caption{Add caption}
%	\footnotesize
	\begin{longtable}{lrrrrrrr}\label{table2}
	%	\caption{Add caption}
		&       &       & \multicolumn{1}{l}{\textbf{Recall}} &       &       &       &  \\
		&       & \multicolumn{1}{l}{\textbf{Knn}} &       &       & \multicolumn{1}{l}{\textbf{SVM}} &       &  \\
		& \multicolumn{1}{l}{\textbf{Orig}} & \multicolumn{1}{l}{\textbf{SMOTE}} & \multicolumn{1}{l}{\textbf{SUGAR}} & \multicolumn{1}{l}{\textbf{Orig}} & \multicolumn{1}{l}{\textbf{SMOTE }} & \multicolumn{1}{l}{\textbf{SUGAR}} & \multicolumn{1}{l}{\textbf{RusBOOST}} \\
		\textbf{ecoli-013} & 0.50  & 0.67  & \textbf{0.72} & 0.63  & 0.64  & 0.70  & 0.66 \\
		\textbf{ecoli-013} & 0.61  & 0.66  & \textbf{0.71} & 0.65  & 0.64  & \textbf{0.71} & 0.66 \\
		\textbf{ecoli1} & 0.57  & 0.57  & \textbf{0.66} & 0.58  & 0.57  & 0.65  & 0.57 \\
		\textbf{ecoli2} & 0.51  & 0.56  & 0.63  & 0.62  & 0.61  & \textbf{0.68} & 0.54 \\
		\textbf{ecoli3} & 0.80  & 0.73  & 0.79  & 0.85  & 0.84  & \textbf{0.89} & 0.85 \\
		\textbf{glass-0123} & 0.73  & 0.89  & 0.90  & 0.94  & 0.94  & \textbf{0.95} & 0.92 \\
		\textbf{glass-0162} & 0.50  & 0.53  & 0.60  & 0.69  & 0.70  & 0.67  & \textbf{0.74} \\
		\textbf{glass-0165} & 0.50  & 0.64  & 0.70  & 0.69  & 0.69  & \textbf{0.80} & 0.75 \\
		\textbf{glass0} & 0.72  & 0.73  & 0.80  & 0.83  & 0.83  & \textbf{0.80} & 0.83 \\
		\textbf{glass1} & 0.61  & 0.74  & 0.70  & 0.74  & 0.76  & \textbf{0.80} & 0.77 \\
		\textbf{glass2} & 0.58  & 0.63  & 0.64  & 0.72  & \textbf{0.74} & 0.69  & \textbf{0.74} \\
		\textbf{glass4} & 0.52  & 0.78  & 0.78  & 0.71  & 0.71  & \textbf{0.87} & 0.84 \\
		\textbf{glass5} & 0.54  & 0.68  & 0.71  & 0.67  & 0.67  & \textbf{0.76} & 0.74 \\
		\textbf{glass6} & 0.83  & 0.86  & 0.93  & 0.82  & 0.82  & \textbf{0.93} & 0.87 \\
		\textbf{haberman} & 0.52  & 0.59  & 0.63  & 0.62  & 0.59  & \textbf{0.66} & 0.59 \\
		\textbf{new-thyroid1} & 0.88  & 0.91  & 0.95  & \textbf{0.99} & \textbf{0.99} & \textbf{0.99} & 0.95 \\
		\textbf{new-thyroid2} & 0.75  & 0.85  & 0.93  & \textbf{0.99} & \textbf{0.99} & \textbf{0.99} & 0.95 \\
		\textbf{pb-134} & 0.61  & 0.82  & 0.94  & 0.78  & 0.91  & 0.93  & \textbf{0.95} \\
		\textbf{pb-0} & 0.85  & 0.88  & 0.90  & \textbf{0.95} & \textbf{0.95} & \textbf{0.95} & 0.94 \\
		\textbf{pima} & 0.63  & 0.63  & 0.66  & 0.73  & 0.71  & \textbf{0.74} & 0.71 \\
		\textbf{segment0} & 0.96  & 0.99  & 0.99  & 0.99  & \textbf{1.00} & \textbf{1.00} & 0.99 \\
		\textbf{shuttle-c24} & 0.70  & 0.70  & 0.70  & 0.65  & 0.68  & \textbf{0.70} & 0.50 \\
		\textbf{vehicle0} & 0.86  & 0.92  & 0.92  & 0.92  & 0.91  & \textbf{0.97} & 0.95 \\
		\textbf{vehicle1} & 0.56  & 0.57  & 0.62  & 0.68  & 0.69  & \textbf{0.85} & 0.67 \\
		\textbf{vehicle2} & 0.78  & 0.84  & 0.86  & 0.94  & 0.92  & \textbf{0.97} & 0.93 \\
		\textbf{vehicle3} & 0.56  & 0.56  & 0.61  & 0.67  & 0.69  & \textbf{0.85} & 0.68 \\
		\textbf{vowel0} & 0.90  & 0.99  & 0.99  & 0.98  & \textbf{1.00} & \textbf{1.00} & 0.98 \\
		\textbf{wisconsin} & 0.95  & 0.96  & 0.96  & \textbf{0.97} & \textbf{0.97} & \textbf{0.97} & 0.96 \\
		\textbf{yeast-0567} & 0.51  & 0.60  & 0.65  & 0.74  & 0.73  & 0.79  & \textbf{0.80} \\
		\textbf{yeast-1289} & 0.50  & 0.50  & 0.66  & 0.60  & 0.57  & 0.69  & \textbf{0.71} \\
		\textbf{yeast-1458} & 0.50  & 0.50  & 0.57  & 0.61  & 0.59  & 0.60  & \textbf{0.65} \\
		\textbf{yeast-17} & 0.50  & 0.54  & 0.65  & 0.64  & 0.63  & 0.71  & \textbf{0.79} \\
		\textbf{yeast-24} & 0.62  & 0.77  & 0.78  & 0.81  & 0.83  & \textbf{0.89} & 0.85 \\
		\textbf{yeast-28} & 0.63  & 0.68  & 0.73  & 0.61  & 0.57  & \textbf{0.74} & 0.72 \\
		\textbf{yeast1} & 0.62  & 0.65  & 0.68  & 0.71  & 0.70  & 0.71  & \textbf{0.72} \\
		\textbf{yeast3} & 0.81  & 0.81  & 0.90  & 0.88  & 0.87  & \textbf{0.92} & 0.87 \\
		\textbf{yeast4} & 0.63  & 0.66  & 0.71  & 0.72  & 0.73  & 0.81  & \textbf{0.84} \\
		\textbf{yeast5} & 0.71  & 0.82  & 0.87  & 0.90  & 0.91  & \textbf{0.95} & 0.88 \\
		\textbf{yeast6} & 0.70  & 0.78  & 0.79  & 0.80  & 0.79  & \textbf{0.86} & 0.84 \\
		\textbf{ecoli-0145} & 0.73  & 0.83  & 0.90  & 0.84  & 0.84  & \textbf{0.93} & 0.84 \\
		\textbf{ecoli-0-147} & 0.56  & 0.77  & 0.77  & 0.80  & 0.81  & \textbf{0.85} & \textbf{0.85} \\
		\textbf{ecoli-01475} & 0.54  & 0.83  & 0.85  & 0.87  & 0.88  & \textbf{0.92} & \textbf{0.92} \\
		\textbf{ecoli-0123} & 0.74  & 0.82  & 0.88  & 0.80  & 0.82  & \textbf{0.92} & 0.88 \\
		\textbf{ecoli-015} & 0.76  & 0.82  & 0.90  & 0.83  & 0.83  & \textbf{0.92} & 0.85 \\
		\textbf{ecoli-0234} & 0.74  & 0.81  & 0.87  & 0.83  & 0.83  & \textbf{0.92} & 0.85 \\
		\textbf{ecoli-0267} & 0.57  & 0.76  & 0.75  & 0.79  & 0.78  & \textbf{0.83} & \textbf{0.83} \\
		\textbf{ecoli-0346} & 0.77  & 0.83  & 0.88  & 0.84  & 0.84  & \textbf{0.94} & 0.85 \\
		\textbf{ecoli-0347} & 0.59  & 0.83  & 0.81  & 0.86  & 0.87  & \textbf{0.89} & 0.88 \\
		\textbf{ecoli-0345} & 0.71  & 0.80  & 0.87  & 0.82  & 0.82  & \textbf{0.91} & 0.82 \\
		\textbf{ecoli-0465} & 0.74  & 0.82  & 0.88  & 0.84  & 0.85  & \textbf{0.93} & 0.86 \\
		\textbf{ecoli-0673} & 0.60  & 0.74  & 0.78  & 0.79  & 0.80  & \textbf{0.87} & 0.81 \\
		\textbf{ecoli-0675} & 0.58  & 0.75  & 0.79  & 0.82  & 0.81  & \textbf{0.89} & 0.85 \\
		\textbf{glass-0146} & 0.50  & 0.51  & 0.57  & 0.69  & 0.70  & 0.67  & \textbf{0.72} \\
		\textbf{glass-015} & 0.50  & 0.50  & 0.54  & 0.66  & 0.66  & 0.68  & \textbf{0.70} \\
		\textbf{glass-045} & 0.50  & 0.73  & 0.80  & 0.70  & 0.70  & 0.83  & \textbf{0.84} \\
		\textbf{glass-065} & 0.50  & 0.67  & 0.73  & 0.66  & 0.66  & \textbf{0.76} & \textbf{0.76} \\
		\textbf{led7digit} & 0.50  & 0.59  & 0.63  & 0.85  & 0.85  & \textbf{0.88} & \textbf{0.88} \\
		\textbf{yeast-0256} & 0.54  & 0.74  & 0.81  & 0.74  & 0.74  & \textbf{0.83} & 0.80 \\
		\textbf{yeast-0257} & 0.69  & 0.86  & 0.92  & 0.86  & 0.85  & \textbf{0.92} & 0.89 \\
		\textbf{yeast-0359} & 0.52  & 0.61  & 0.61  & 0.62  & 0.62  & 0.73  & \textbf{0.74} \\
		\textbf{Average} & 0.64  & 0.73  & 0.77  & 0.78  & 0.78  & \textbf{0.84} & 0.81 \\
		\caption{Classification recall on the 60 imbalanced datasets.}\\
	\end{longtable}%
	%
%\end{longtable}%

% Table generated by Excel2LaTeX from sheet 'Sheet1'

	\begin{longtable}{lrrrrrrr}\label{table3}
	
		&       &       & \multicolumn{1}{l}{\textbf{Precision}} &       &       &       &  \\
		&       & \multicolumn{1}{l}{\textbf{Knn}} &       &       & \multicolumn{1}{l}{\textbf{SVM}} &       &  \\
		& \multicolumn{1}{l}{\textbf{Orig}} & \multicolumn{1}{l}{\textbf{SMOTE}} & \multicolumn{1}{l}{\textbf{SUGAR}} & \multicolumn{1}{l}{\textbf{Orig}} & \multicolumn{1}{l}{\textbf{SMOTE }} & \multicolumn{1}{l}{\textbf{SUGAR}} & \multicolumn{1}{l}{\textbf{RusBOOST}} \\
		\textbf{ecoli-013} & 0.49  & 0.66  & 0.67  & 0.61  & \textbf{0.62} & 0.61  & 0.57 \\
		\textbf{ecoli-013} & 0.65  & 0.67  & \textbf{0.70} & 0.64  & 0.64  & 0.69  & 0.67 \\
		\textbf{ecoli1} & 0.54  & 0.55  & \textbf{0.62} & 0.54  & 0.54  & 0.59  & 0.55 \\
		\textbf{ecoli2} & 0.59  & 0.61  & \textbf{0.64} & 0.59  & 0.60  & 0.60  & 0.50 \\
		\textbf{ecoli3} & 0.75  & 0.72  & 0.74  & 0.73  & 0.74  & 0.74  & \textbf{0.78} \\
		\textbf{glass-0123} & 0.88  & 0.94  & 0.94  & 0.90  & 0.90  & \textbf{0.91} & 0.89 \\
		\textbf{glass-0162} & 0.46  & 0.50  & 0.58  & 0.65  & \textbf{0.67} & 0.61  & 0.64 \\
		\textbf{glass-0165} & 0.48  & 0.66  & 0.67  & 0.69  & 0.69  & \textbf{0.72} & \textbf{0.72} \\
		\textbf{glass0} & 0.73  & 0.73  & 0.76  & 0.79  & 0.82  & 0.77  & \textbf{0.82} \\
		\textbf{glass1} & 0.65  & 0.75  & \textbf{0.78} & 0.74  & 0.76  & \textbf{0.78} & \textbf{0.78} \\
		\textbf{glass2} & 0.59  & 0.64  & 0.64  & 0.66  & \textbf{0.67} & 0.62  & 0.62 \\
		\textbf{glass4} & 0.51  & 0.80  & 0.77  & 0.76  & 0.75  & 0.78  & \textbf{0.82} \\
		\textbf{glass5} & 0.53  & 0.68  & 0.67  & 0.69  & 0.69  & 0.69  & \textbf{0.73} \\
		\textbf{glass6} & 0.92  & 0.92  & \textbf{0.93} & 0.90  & 0.90  & 0.90  & 0.84 \\
		\textbf{haberman} & 0.56  & 0.59  & 0.61  & 0.60  & 0.57  & \textbf{0.66} & 0.58 \\
		\textbf{new-thyroid1} & 0.90  & 0.91  & 0.93  & 0.97  & \textbf{0.98} & 0.95  & 0.92 \\
		\textbf{new-thyroid2} & 0.90  & 0.95  & \textbf{0.98} & 0.96  & 0.96  & 0.95  & 0.93 \\
		\textbf{pb-134} & 0.70  & 0.89  & 0.91  & 0.89  & 0.95  & \textbf{0.92} & 0.92 \\
		\textbf{pb-0} & 0.87  & 0.87  & 0.89  & 0.84  & 0.84  & \textbf{0.89} & 0.91 \\
		\textbf{pima} & 0.63  & 0.62  & 0.65  & 0.72  & 0.71  & \textbf{0.73} & 0.70 \\
		\textbf{segment0} & 0.94  & 0.96  & 0.97  & 0.99  & \textbf{1.00} & 0.99  & 0.99 \\
		\textbf{shuttle-c24} & \textbf{0.70} & \textbf{0.70} & \textbf{0.70} & 0.69  & \textbf{0.70} & \textbf{0.70} & 0.48 \\
		\textbf{vehicle0} & 0.85  & 0.89  & 0.89  & 0.93  & \textbf{0.95} & 0.93  & 0.93 \\
		\textbf{vehicle1} & 0.56  & 0.57  & 0.61  & 0.68  & 0.69  & \textbf{0.84} & 0.67 \\
		\textbf{vehicle2} & 0.76  & 0.82  & 0.83  & 0.95  & 0.94  & \textbf{0.95} & 0.92 \\
		\textbf{vehicle3} & 0.57  & 0.56  & 0.61  & 0.68  & 0.69  & \textbf{0.84} & 0.68 \\
		\textbf{vowel0} & 0.89  & 0.93  & 0.97  & \textbf{1.00} & \textbf{1.00} & \textbf{1.00} & 0.95 \\
		\textbf{wisconsin} & 0.94  & \textbf{0.96} & 0.95  & 0.95  & \textbf{0.96} & \textbf{0.96} & 0.95 \\
		\textbf{yeast-0567} & 0.50  & 0.67  & \textbf{0.76} & 0.67  & 0.68  & 0.68  & 0.68 \\
		\textbf{yeast-1289} & 0.48  & 0.49  & \textbf{0.68} & 0.54  & 0.53  & 0.60  & 0.54 \\
		\textbf{yeast-1458} & 0.48  & 0.49  & \textbf{0.57} & 0.54  & 0.54  & 0.53  & 0.53 \\
		\textbf{yeast-17} & 0.47  & 0.59  & \textbf{0.67} & 0.62  & 0.57  & 0.62  & 0.60 \\
		\textbf{yeast-24} & 0.80  & 0.91  & \textbf{0.89} & 0.82  & 0.84  & 0.85  & 0.84 \\
		\textbf{yeast-28} & 0.68  & 0.69  & \textbf{0.72} & 0.63  & 0.56  & 0.71  & 0.61 \\
		\textbf{yeast1} & 0.63  & 0.64  & 0.67  & 0.68  & 0.68  & \textbf{0.69} & 0.70 \\
		\textbf{yeast3} & 0.82  & 0.81  & 0.82  & 0.81  & 0.82  & 0.77  & \textbf{0.85} \\
		\textbf{yeast4} & 0.64  & \textbf{0.66} & \textbf{0.66} & 0.63  & 0.62  & 0.60  & 0.60 \\
		\textbf{yeast5} & 0.79  & 0.82  & 0.83  & 0.83  & 0.82  & 0.72  & \textbf{0.86} \\
		\textbf{yeast6} & 0.66  & 0.68  & \textbf{0.72} & 0.65  & 0.64  & 0.61  & 0.60 \\
		\textbf{ecoli-0145} & 0.83  & 0.87  & 0.86  & \textbf{0.89} & 0.86  & 0.83  & 0.77 \\
		\textbf{ecoli-0-147} & 0.59  & \textbf{0.86} & \textbf{0.86} & 0.82  & \textbf{0.86} & 0.84  & 0.79 \\
		\textbf{ecoli-01475} & 0.57  & \textbf{0.90} & 0.87  & 0.86  & \textbf{0.90} & 0.84  & 0.86 \\
		\textbf{ecoli-0123} & 0.82  & 0.89  & \textbf{0.92} & 0.88  & 0.84  & 0.86  & 0.83 \\
		\textbf{ecoli-015} & 0.83  & 0.88  & \textbf{0.89} & \textbf{0.89} & 0.88  & 0.85  & 0.83 \\
		\textbf{ecoli-0234} & 0.80  & 0.88  & 0.87  & \textbf{0.89} & 0.87  & 0.87  & 0.82 \\
		\textbf{ecoli-0267} & 0.61  & 0.81  & 0.80  & \textbf{0.84} & 0.84  & 0.80  & 0.81 \\
		\textbf{ecoli-0346} & 0.87  & 0.88  & 0.88  & \textbf{0.90} & 0.88  & 0.86  & 0.81 \\
		\textbf{ecoli-0347} & 0.61  & \textbf{0.89} & 0.88  & \textbf{0.89} & \textbf{0.89} & 0.84  & 0.84 \\
		\textbf{ecoli-0345} & 0.78  & 0.83  & 0.86  & \textbf{0.87} & 0.86  & 0.86  & 0.80 \\
		\textbf{ecoli-0465} & 0.83  & 0.86  & 0.86  & \textbf{0.88} & 0.87  & \textbf{0.88} & 0.82 \\
		\textbf{ecoli-0673} & 0.63  & 0.85  & 0.84  & 0.84  & 0.84  & \textbf{0.85} & 0.82 \\
		\textbf{ecoli-0675} & 0.64  & \textbf{0.84} & 0.83  & 0.82  & 0.82  & 0.82  & 0.86 \\
		\textbf{glass-0146} & 0.46  & 0.51  & 0.60  & \textbf{0.64} & 0.67  & 0.59  & 0.61 \\
		\textbf{glass-015} & 0.45  & 0.50  & 0.55  & \textbf{0.65} & 0.67  & 0.61  & 0.62 \\
		\textbf{glass-045} & 0.48  & 0.73  & 0.79  & 0.70  & 0.70  & 0.80  & \textbf{0.82} \\
		\textbf{glass-065} & 0.46  & 0.67  & 0.74  & 0.68  & 0.68  & 0.74  & \textbf{0.76} \\
		\textbf{led7digit} & 0.46  & 0.72  & 0.77  & 0.78  & 0.78  & \textbf{0.79} & 0.76 \\
		\textbf{yeast-0256} & 0.70  & \textbf{0.88} & \textbf{0.88} & 0.71  & 0.68  & 0.75  & 0.71 \\
		\textbf{yeast-0257} & \textbf{0.95} & 0.93  & 0.93  & 0.90  & 0.88  & 0.85  & 0.87 \\
		\textbf{yeast-0359} & 0.53  & \textbf{0.71} & \textbf{0.71} & 0.59  & 0.59  & 0.64  & 0.61 \\
		\textbf{Average} & 0.67  & 0.76  & \textbf{0.78} & 0.77  & 0.77  & 0.77  & 0.75 \\
	\caption{Classification precision on the 60 imbalanced datasets.}\\
	\end{longtable}%

\end{document}